\newcommand\blfootnote[1]{%
	\begingroup
	\renewcommand\thefootnote{}\footnote{#1}%
	\addtocounter{footnote}{-1}%
	\endgroup
}
\numberwithin{equation}{section}
\newtheorem{thm}{Theorem}[section]
\newtheorem{lemma}[thm]{Lemma}
\newtheorem{pro}[thm]{Proposition}
\newtheorem{corollary}[thm]{Corollary}
\newtheorem{Exa}{Example}[section]
\newtheorem{as}{Assumption}
\newcommand{\be}{\begin{equation}}
\newcommand{\ee}{\end{equation}}
\newcommand{\bea}{\begin{eqnarray*}}
\newcommand{\eea}{\end{eqnarray*}}
\newcommand{\mR}{\mathbb{R}}
\newcommand{\mN}{\mathbb{N}}
\newcommand{\mE}{\mathbb{E}}
\newcommand{\mcHK}{\mathcal{H}_K}
\newcommand{\mcL}{\mathcal{L}}
\newcommand{\mcE}{\mathcal{E}}
\newcommand{\mcF}{\mathcal{F}}
\newcommand{\tr}{\operatorname{tr}}
\newcommand{\la}{\langle}
\newcommand{\ra}{\rangle}
\newcommand{\eref}[1] {(\ref{#1})}
\begin{document}
\title{Generalization Properties of Doubly Stochastic Learning Algorithms
}

\renewcommand\Affilfont{\tiny}
\author{Junhong Lin$^{\dag\qquad}$ Lorenzo Rosasco$^{\dag\ddag}$\\
{\scriptsize $^\dag$LCSL, Massachusetts Institute of Technology and Istituto Italiano di Tecnologia, Cambridge, MA 02139, USA}\\
{\scriptsize $^\ddag$DIBRIS, Universit\`a degli Studi di Genova, Via Dodecaneso 35, Genova, Italy}\\
}

\maketitle \baselineskip 16pt

\begin{abstract}
Doubly stochastic learning algorithms are  scalable kernel methods that perform very well in practice. However, their generalization properties are not well understood and their analysis is challenging since  the corresponding learning sequence may not be in the hypothesis space induced by the kernel.
In this paper, we  provide an in-depth theoretical analysis for different variants of doubly stochastic learning algorithms within the setting of nonparametric regression in  a reproducing kernel Hilbert space and considering  the square loss. Particularly, we derive convergence results on the generalization error for the studied algorithms either with or without an explicit penalty term.  To the best of our knowledge, the derived results for the unregularized variants are the first of this kind, while the results for the regularized variants improve those in the literature.
The novelties in our proof are  a  sample error  bound that requires controlling the trace norm of a cumulative operator, and a refined analysis of bounding initial error. \\
{\bf Keywords:} Kernel method, Doubly stochastic algorithm, Nonparametric regression
\blfootnote{J. Lin is now with the \'{E}cole Polytechnique F\'{e}d\'{e}rale de Lausanne, Switzerland.}
\blfootnote{Emails: jhlin5@hotmail.com (J. Lin);  lrosasco@mit.edu (L. Rosasco).}

 \end{abstract}

\section{Introduction}
In nonparametric regression,  we are given a set of samples of the form $\{(x_i, y_i)\}_{i=1}^T$, where each $x_i \in \mR^d$ is an input, $y_i$ is a real-valued output, and the samples are drawn i.i.d. from an unknown distribution on $\mR^d \times \mR.$ The goal is to learn a function which can be used to predict future outputs based on the inputs.

Kernel methods \cite{shawe2004kernel,cucker2007learning,steinwart2008support} are a popular nonparametric technique based on choosing a hypothesis space to be  a reproducing kernel Hilbert space (RKHS).
Stochastic/online learning algorithms \cite{kivinen2004online,cesa-bianchi2004} (often called 
stochastic gradient methods \cite{robbins1951stochastic,polyak1992acceleration} in convex optimization) are among  the most efficient and fast learning algorithms.  At each iteration, they compute
a gradient estimate with respect to a new sample point and then updates the current solution by subtracting the scaled gradient estimate. In general, the computational complexities for training are $O(T + Td)$ in space and $O(T^2d)$ in time, due to the nonlinearity of kernel methods.
In recent years, different types of online/stochastic learning algorithms, either with or without an explicit penalty term, have been proposed and  analyzed, see e.g. \cite{cesa-bianchi2004,yao2006dynamic,ying2008online,shamir2013stochastic,tarres2014online, rosasco2015learning, dieuleveut2016harder,lin2017online} and references therein.

In  classic stochastic learning algorithms, all  sampling points  need being stored for testing. Thus, the implementation of the algorithm may be difficult in learning problems with high-dimensional inputs and large datasets. To tackle such a challenge, an alternative stochastic method, called  doubly stochastic learning algorithm was proposed in \cite{dai2014scalable}. The new algorithm is based on the random feature approach proposed in  \cite{rahimi2007random}. The latter result is based on Bochner's theorem and shows that most shift-invariant kernel functions can be expressed as an inner product  of some suitable random features. Thus the kernel function at each iteration in the original stochastic learning algorithm can be estimated (or replaced) by a random feature. As a result, the new algorithm allows us to avoid keeping all the sample points since it only requires generating the random features and recovers past random resampling them using specific random seeds \cite{dai2014scalable}. The computational complexities of the algorithm are $O(T)$ (independent of the dimension of the data) in space and $O(T^2 d)$ in time. Numerical experiments  given in \cite{dai2014scalable}, show that the algorithm is fast and comparable with state-of-the-art algorithms. Convergence results with respect to the solution of regularized expected risk minimization  were derived in \cite{dai2014scalable} for doubly stochastic learning algorithms with regularization, considering  general Lipschitz and smooth losses. 

In this paper, we study generalization  properties of doubly stochastic learning algorithms in the framework of nonparametric regression with the square loss.
Our contributions are  theoretical. First, for the first time, we prove generalization error bounds for doubly stochastic learning algorithms without regularization, either using a fixed constant step-size or a decaying step-size. Compared with the regularized version studied in \cite{dai2014scalable}, doubly stochastic learning algorithms without regularization do not involve the model selection of regularization parameters, and thus it may have some  computational advantages in practice. Secondly, we also prove generalization error bounds for doubly stochastic learning algorithms with regularization. Compared with the results in \cite{dai2014scalable}, our convergence rates are faster and  do not require the bounded assumptions on the gradient estimates as  in \cite{dai2014scalable},  see the discussion section for details.  The key ingredients to our proof are an error decomposition and an induction argument, which enables us to derive total error bounds provided that the initial (or approximation) and sample errors can be bounded. The initial and sample errors are bounded using  properties from integral operators and  functional analysis. The difficulty in the analysis is the estimation of the sample error, since the sequence generated by the algorithm may not be in the hypothesis space.
 The novelty in our proof is the estimation of the sample error involving upper bounding a trace norm of an operator, and a refined analysis of bounding the initial error.

 The rest of the paper is organized as follows. In the next section, we introduce the learning setting we consider and the doubly stochastic learning algorithms.
 In Section \ref{sec:generalization}, we present the main results on generalization properties for the studied algorithms and give some simple discussions.
Sections \ref{sec:error} to \ref{sec:deriving} are devoted to the proofs of all the main results.

\section{Learning Setting and Doubly Stochastic Learning Algorithms}\label{sec:doubly}
Learning a function from a given finite number of instances through efficient and practical algorithms is the basic goal of learning theory. Let the input space $X$ be a closed subset of Euclidean space $\mR^d$, the output space $Y \subseteq \mR$, and $Z=X\times Y.$ Let $\rho$ be a fixed Borel probability measure on $Z$, with its induced marginal measure on $X$ and conditional measure on $Y$  given $x \in X$ denoted by $\rho_X(\cdot)$ and $\rho(\cdot | x)$ respectively. In statistical learning theory, the Borel probability measure $\rho$ is unknown, but only a set of sample points $\mathbf z=\{z_i=(x_i, y_i)\}_{i=1}^T$ of size $T\in\mN$ is given. Here, we assume that the sample points are independently and identically drawn from the distribution  $\rho$.

 The quality of a function $f: X \to Y$ can be measured in terms of the expected risk with the square loss defined as
 \be\label{generalization_error}
\mcE(f) = \int_{Z} (f(x) - y)^2 d\rho(z).
\ee
In this case, the function minimizing the expected risk over all measurable functions is the regression function given by
\be\label{regressionfunc}
f_{\rho}(x) = \int_Y y d \rho(y | x),\qquad x \in X.
\ee
For any $f \in \mcL_{\rho}^2,$ it is easy to prove that
\be\label{excesserror}
\mcE(f) - \mcE(f_{\rho}) = \|f- f_{\rho}\|^2_{\rho}.
\ee
Here, $\mcL_{\rho}^2$ is the Hilbert space of square integral functions with respect to $\rho_X$, with its induced norm given by
$\|f\|_{\rho} = \|f\|_{\mcL_{\rho}^2} = \left (\int_X |f(x)|^2 d \rho_X(x)\right)^{1/2}$.
Throughout this paper we assume that $\int_{Y} y^2 d\rho< \infty.$ Thus, using \eref{excesserror} with $f=0$, $\mcE(f_{\rho}) + \|f_{\rho}\|_{\rho}^2$ is finite.

Kernel methods is based on choosing a hypothesis space as a reproducing kernel Hilbert space (RKHS). Recall that a reproducing kernel $K$ is a symmetric function $K: X
\times X \to \mR$ such that $(K(u_i, u_j))_{i, j=1}^\ell$ is
positive semidefinite for any finite set of points
$\{u_i\}_{i=1}^\ell$ in $X$. The kernel $K$ defines a RKHS $(\mathcal{H}_K, \|\cdot\|_K)$ as the
completion of the linear span of the set $\{K_x(\cdot):=K(x,\cdot):
x\in X\}$ with respect to the inner product $\la K_x,
K_u\ra_{K}:=K(x,u).$ For simplicity,
we assume that $K$ is a Mercer kernel, that is, $X$
is a compact set and $K : X\times X \to \mR$ is continuous. 

Online/stochastic learning is an important class of efficient algorithms to perform learning tasks. Over the past few decades, several variants of online/stochastic learning algorithms have been studied, many of which take the form of
\be\label{kernelOnline}
h_{t+1} =(1 - \lambda) h_t - \eta_t (h_t(x_t) - y_t) K_{x_t}, \qquad t=1, \cdots, T,
\ee
and generalization properties have been derived. Here $\{\eta_t>0\}$ is a step-size sequence, and $\lambda$ can be chosen as a positive constant depending on the sample size $\lambda(T)>0$ \cite{yao2006dynamic,tarres2014online}, or to be zero \cite{ying2008online,shamir2013stochastic,lin2017online}. In general, the computational complexities of the algorithm are $O(T+Td)$ in space and $O(T^2 d)$ in time.

According to Bochner's theorem, a continuous kernel $K(x, x') = k(x - x')$ on $\mR^d$
is positive definite if and only if $k(\delta)$ is the Fourier transform of a non-negative measure. Thus, most shift-invariant kernel functions can be expressed as an integration of some random features. A basic example for the Gaussian kernel is detailed as follows.

\begin{Exa}[Random Fourier Features \cite{rahimi2007random}]
  Let the Gaussian kernel $$K(x, x') = e^{-{\|x-x'\|^2 \over 2\sigma^2}},$$ for some $\sigma>0.$ Then according to Fourier inversion theorem, and by a simple calculation, one can prove that  \bea
  K(x,x') = {\sigma^d \over (\sqrt{2\pi})^{d+2} } \int_{\mR^d} \int_0^{2\pi}  \sqrt{2}\cos(\omega^{\top} x + b) \sqrt{2}\cos(\omega^{\top} x' + b) e^{-{\sigma^2\|\omega\|^2 \over 2} } d\omega db.
  \eea
 \end{Exa}

 Replacing $K_{x_t}$ in \eref{kernelOnline} by an unbiasd estimate with respect to a random feature, we get the doubly stochastic learning algorithms\footnote{Note that \cite{dai2014scalable} studied the algorithm with a general convex loss function. Specializing to the square loss leads to the algorithm \eref{Alg}.}.
Let $\mu$ be another probability measure on a measurable set $V$, and $\phi: V \times X \to \mR$ a  square-integrable (with respect to $\mu \otimes \rho_X$) function. Assume that the kernel $K$ can be written as \cite{rahimi2007random,bach2017equivalence}
\be \label{kernel_integration}
K(x,x') = \int_{V} \phi(v,x) \phi(v,x') d \mu(v) = \la \phi(\cdot,x), \phi(\cdot,x') \ra_{L^2_{\mu}}, \qquad \forall x,x' \in X.
\ee
Let $v_1,\cdots,v_T $ be $T$ elements in $V$, i.i.d. according to the distribution $\mu$.
The \emph{doubly stochastic learning algorithm} associated with random features $\{\phi_{v_t}\}_t$ is defined by $f_1 =0$ and
\be\label{Alg}
f_{t+1}=(1 - \eta_t \lambda)f_t- \eta_t (f_t(x_t) - y_t) \phi_{v_t}(x_t) \phi_{v_t}, \qquad t=1, \ldots, T. \ee
  The computational complexities of the algorithm are $O(T)$ (independent of the dimension of the data) in space and $O(T^2 d)$ in time.

In this paper, we study the generalization properties of Algorithm \eref{Alg}, either with a fixed constant step-size $\{\eta_t=\eta\}_{t}$ or a decaying step-size $\{\eta_t=\eta t^{-\theta}\}_{t},\theta\in(0,1)$, where $\lambda \geq 0$. Under basic assumptions in the standard learning theory and with appropriate choices of parameters, we shall prove upper bounds for the excess expected risks, i.e.,
$
\mE\|f_T - f_{\rho}\|_{\rho}^2.
$

\paragraph{Notation}
$\mN$ denotes the set of positive integers. $(a)_+ = \max(a,0)$ for any $a \in \mR.$
For $t \in \mN,$ the set $\{1,2, ..., t\}$ is denoted by $[t]$.
We will use the following conventional notations $0^0 = 1,$ $1/0 = \infty,$ $\prod_{j=t+1}^t a_j = 1$ and $\sum_{j=t+1}^t a_j = 0$ for any sequence of real numbers $\{a_j\}_{j\in \mN}.$
For any operator $L: H \to H,$
 on a Hilbert space $H$,  $I$ denotes the identity operator on $H$
 and $\Pi_{t+1}^T(L) = \prod_{k=t+1}^T (I - \eta_k L)$ when $t \in [T-1]$ and $\Pi_{T+1}^T(L) = I$.
  For a given bounded operator $L: \mcL^2_{\rho} \to \mcL^2_{\rho}, $ $\|L\|$ denotes the operator norm of $L$, i.e., $\|L\| = \sup_{f\in \mcL^2_{\rho}, \|f\|_{\rho}=1} \|Lf\|_{\rho}$. For two positive sequences $\{a_i\}_{i}$ and $\{b_i\}_{i},$ $a_i \leq O(b_i)$ (or $a_i\lesssim b_i$) stands for $a_i\leq Cb_i$ for some positive constant $C$ (independent of $i$) for all $i$. The indicator function of a subset $A$ is denoted by ${\bf 1}_{A}.$

\section{Generalization Properties for Doubly Stochastic Learning Algorithms}\label{sec:generalization}
In this section, after introducing some basic assumptions, we state our main results, following with simple discussions.
\subsection{Assumptions}
We first make the following basic assumption, with respect to the RKHS and its associated kernel as well as the underlying features.
\begin{as}\label{as:boundedness}
$\mcHK$ is separable and $K$ is measurable. Furthermore,
  there exists a positive constant $\kappa \geq 1$, such that
 $ K(x,x) \leq \kappa^2 $
 and
 $ \phi_{v}(x) \phi_{v}(x') \leq \kappa^2 $  almost surely with respect to $\mu \otimes \rho_X$.
\end{as}

The bounded assumptions on the kernel function and random features are fairly common. For example, when $K(\cdot,\cdot)$ is a Gaussian kernel with variance $1$, $K(x,x') = e^{-{\|x-x'\|^2 / 2}}$,  we have $\kappa^2 =1$.

To present our next assumption, we need to introduce
the integral operator  $L_K: \mcL^2_{\rho} \to \mcL^2_{\rho}$, defined as
\be\label{integraloper}
 L_{K} (f) =  \int_{X} f(x) K_x d\rho_{X}(x).
 \ee
 Under Assumption \ref{as:boundedness}, the operator $L_{K}$ is known to be symmetric, positive definite and trace class.
  Thus, its power $L_K^{\zeta}$  is well defined for $\zeta>0$.
Particularly, we know that \cite{cucker2007learning,steinwart2008support} $L_K^{\zeta}(\mcL^2_{\rho}) \subseteq \mcHK$ for $\zeta > {1 \over 2}$ and
$L_K^{1/2}(\mcL^2_{\rho}) = \mcHK$ with 
\be \label{isometry}
\|L^{1/2}_K g\|_K = \| g\|_{\rho}, \qquad \mbox{for all } g \in \mcL^2_{\rho}.
\ee
We make the following assumption on the regularity of the regression function.
\begin{as}\label{as:regularity}
  There exists $\zeta>0$ and $R>0$, such that $\|L_K^{-\zeta} f_{\rho}\|_{\rho} \leq R.$
\end{as}

The above assumption is very standard \cite{cucker2007learning,steinwart2008support} in nonparametric regression. It characterizes how big is  the subspace  that the target function $f_{\rho}$ lies in. Particularly, the bigger the $\zeta$ is,
the more stringent is the assumption  and the smaller is  the subspace, since $L_K^{\zeta_1}(L_{\rho}^2) \subseteq L_K^{\zeta_2}(L_{\rho}^2)$ when $\zeta_1 \geq \zeta_2.$ Moreover, when $\zeta =0,$ we are making no assumption as $\|f_{\rho}\|_{\rho}< \infty$ holds trivially, while for $\zeta = 1/2,$ we are requiring $f_{\rho} \in \mcHK$\footnote{This should be interpreted as that there exists a $f_{*} \in \mcHK$ such that $f_{\rho} = f_*$ $\rho_X$-almost surely.}.

Finally, the last assumption is related to the capacity of the RKHS.
\begin{as}\label{as:eigenvalues}
  For some $\gamma \in [0,1]$ and $c_{\gamma}>0$, $L_K$ satisfies
\be\label{eigenvalue_decay}
 \tr(L_K(L_K+\lambda I)^{-1})\leq c_{\gamma} \lambda^{-\gamma}, \quad \mbox{for all } \lambda>0.
\ee
\end{as}
The left hand-side of  \eref{eigenvalue_decay} is called as the effective
dimension \cite{caponnetto2007optimal}, or the degrees of freedom.
It can be related to covering/entropy number conditions, see \cite{steinwart2009oracle,steinwart2008support} for further details.
Assumption \ref{as:eigenvalues} is always true for $\gamma=1$ and $c_{\gamma} =\kappa^2$, since
 $L_K$ is a trace class operator which implies the eigenvalues of $L_K$, denoted as $\sigma_i$, satisfy
 $\tr(L_K) = \sum_{i} \sigma_i \leq \kappa^2.$
  The case $\gamma=1$ is referred to as the capacity independent setting.
  Assumption \ref{as:eigenvalues} with $\gamma \in]0,1]$ allows to derive better error rates. It is satisfied, e.g.,
   if the eigenvalues of $L_K$ satisfy a polynomial decaying condition $\sigma_i \sim i^{-1/\gamma}$, or with $\gamma=0$ if $L_K$ is finite rank.  
   Kernels with polynomial decaying eigenvalues include those that underlie for the Sobolev
   spaces with different orders of smoothness  (e.g. \cite{gu2002smoothing}).
   As a concrete example, the first-order Sobolev kernel $K(x, x')
   = 1 + \min\{x, x'\}$ generates a
   RKHS of Lipschitz functions, and one has that $\sigma_i \sim i^{-2}$ and thus $\gamma={1\over 2}$.
\subsection{Main Results}
We are now ready to present our main results, whose proofs are postponed to Section \ref{sec:deriving}.
Our first main result provides generalization error bounds for the studied algorithms with $\lambda=0$ and a constant (but depending on $T$)  step-size.

 \begin{thm}\label{thm:general_fixeta_un}
Under Assumptions \ref{as:boundedness}, \ref{as:regularity} and \ref{as:eigenvalues}, Let $\{f_{t}\}_{t\in [T]}$ be generated by \eref{Alg} with $\lambda=0$, $\eta_t = \eta(T)$ for all $t\in [T]$ such that
\be\label{etaT}
0< \eta(T)^{\gamma+1} T^{\gamma} \ln(\mathrm{2}T) \leq {1 \over 4 \kappa^2 (c_{\gamma} + \kappa^2)}.
\ee
 Then
\be\label{Terror_etaT}
\mE [\|f_{T+1} - f_{\rho}\|_{\rho}^2] \leq O( (\eta(T) T)^{-2\zeta}  + \eta(T)^{\gamma+1} T^{\gamma} \ln T ). \ee
{Here, the constant $C$ in the right-hand side depends only on $R,\zeta, \|f_{\rho}\|_{\rho}, \mcE(f_{\rho}),\kappa^2,c_{\gamma}$, and will be given explicitly in the proof.}
\end{thm}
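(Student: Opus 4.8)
The plan is to analyze the recursion \eqref{Alg} (with $\lambda=0$ and $\eta_t\equiv\eta(T)=:\eta$) directly in $\mcL^2_\rho$, where it becomes an inhomogeneous linear recursion driven by a martingale noise. Let $\mcF_t$ be the $\sigma$-algebra generated by $\{(x_s,y_s,v_s)\}_{s<t}$. By the integral representation \eqref{kernel_integration} we have $\mE[\phi_{v_t}(x_t)\phi_{v_t}\mid x_t]=K_{x_t}$, so that $\mE[(f_t(x_t)-y_t)\phi_{v_t}(x_t)\phi_{v_t}\mid\mcF_t]=L_K(f_t-f_\rho)$. Setting $\xi_t=(f_t(x_t)-y_t)\phi_{v_t}(x_t)\phi_{v_t}-L_K(f_t-f_\rho)$ gives $f_{t+1}-f_\rho=(I-\eta L_K)(f_t-f_\rho)-\eta\xi_t$, and unrolling from $f_1=0$ yields the error decomposition
\[
f_{T+1}-f_\rho=-\Pi_1^T(L_K)f_\rho-\eta\sum_{t=1}^T\Pi_{t+1}^T(L_K)\xi_t .
\]
Since each $\Pi_{t+1}^T(L_K)$ is deterministic and $\{\xi_t\}$ is a martingale difference sequence ($\mE[\xi_t\mid\mcF_t]=0$), the cross terms vanish in expectation and
\[
\mE\|f_{T+1}-f_\rho\|_\rho^2\le 2\|\Pi_1^T(L_K)f_\rho\|_\rho^2+2\eta^2\sum_{t=1}^T\mE\|\Pi_{t+1}^T(L_K)\xi_t\|_\rho^2 ,
\]
an \emph{initial error} plus a \emph{sample error}.

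The initial error I would control by spectral calculus and Assumption \ref{as:regularity}: writing $f_\rho=L_K^{\zeta}(L_K^{-\zeta}f_\rho)$ and using $\sup_{\sigma\in[0,\kappa^2]}(1-\eta\sigma)^T\sigma^\zeta\le\sup_\sigma e^{-\eta\sigma T}\sigma^\zeta\lesssim(\eta T)^{-\zeta}$ (the contraction $\eta L_K\preceq I$ being guaranteed by \eqref{etaT}) gives $\|\Pi_1^T(L_K)f_\rho\|_\rho^2\lesssim R^2(\eta T)^{-2\zeta}$, the first term of \eqref{Terror_etaT}. For the per-step sample error I would drop the conditional mean ($\mE\|\cdot-\mE[\cdot\mid\mcF_t]\|^2\le\mE\|\cdot\|^2$), invoke $\phi_{v_t}(x_t)^2\le\kappa^2$ (Assumption \ref{as:boundedness}), and use the independence of $v_t$ from $(x_t,y_t)$ and $\mcF_t$ to obtain
\[
\mE[\|\Pi_{t+1}^T(L_K)\xi_t\|_\rho^2\mid\mcF_t]\le\kappa^2\,\mcE(f_t)\,\mE_{v}\|\Pi_{t+1}^T(L_K)\phi_v\|_\rho^2 .
\]
The crucial step is the identity $\mE_v[\la g,\phi_v\ra_{\rho}\phi_v]=L_Kg$ for all $g\in\mcL^2_\rho$ (a restatement of \eqref{kernel_integration} with $v\sim\mu$), which for the self-adjoint operator $\Pi_{t+1}^T(L_K)$ turns the feature average into the trace norm of a cumulative operator,
\[
\mE_v\|\Pi_{t+1}^T(L_K)\phi_v\|_\rho^2=\tr(\Pi_{t+1}^T(L_K)^2L_K).
\]

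It then remains to bound $\sum_{t=1}^T\tr(\Pi_{t+1}^T(L_K)^2L_K)$ and to close an induction. Using $(1-\eta\sigma)^{2(T-t)}\le e^{-2\eta\sigma(T-t)}\le(1+2\eta\sigma(T-t))^{-1}$ together with Assumption \ref{as:eigenvalues} at $\lambda=(2\eta(T-t))^{-1}$ gives $\tr(\Pi_{t+1}^T(L_K)^2L_K)\lesssim c_\gamma(\eta(T-t))^{\gamma-1}$; summing over $t$ (the $t=T$ term bounded by $\tr(L_K)\le\kappa^2$) produces the bound $\lesssim\eta^{\gamma-1}T^\gamma\ln T$ uniformly in $\gamma\in[0,1]$, and multiplying by the prefactor $2\eta^2\kappa^2\mcE(f_\rho)$ reproduces the second term of \eqref{Terror_etaT}. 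Because $\mcE(f_t)=\mcE(f_\rho)+\mE\|f_t-f_\rho\|_\rho^2$, the sample error couples $e_{T+1}:=\mE\|f_{T+1}-f_\rho\|_\rho^2$ back to $e_t$ for $t\le T$, giving
\[
e_{T+1}\le 2\|\Pi_1^T(L_K)f_\rho\|_\rho^2+2\eta^2\kappa^2\sum_{t=1}^T(\mcE(f_\rho)+e_t)\tr(\Pi_{t+1}^T(L_K)^2L_K),
\]
which I would resolve by induction on $T$, using the smallness enforced by \eqref{etaT} to absorb the feedback terms into the target rate.

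The main obstacle I anticipate is exactly this coupling. Bounding the feedback sum $\eta^2\sum_t\tr(\Pi_{t+1}^T(L_K)^2L_K)\,e_t$ requires inserting the inductive hypothesis $e_t\lesssim(\eta t)^{-2\zeta}+\eta^{\gamma+1}t^\gamma\ln t$ and showing the outcome is dominated by the right-hand side of \eqref{Terror_etaT}. The delicate piece is the contribution of the initial-error part $(\eta t)^{-2\zeta}$, which is large for small $t$ (indeed $\Theta(\eta^{-2\zeta})$ at $t=1$), so the condition \eqref{etaT}, with its $\ln(\mathrm 2T)$ factor, must be used carefully to keep this term — and not merely the variance — under control. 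This is the refined analysis of the initial error alluded to in the introduction, and it is where the explicit constant $C$ and the constraint on $\eta(T)$ are pinned down.
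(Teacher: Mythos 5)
Your error decomposition, initial-error estimate, per-step variance bound, and the trace identity and effective-dimension estimate all coincide with the paper's own steps (Propositions \ref{pro:errordecomposition}, \ref{pro:initialerror}, \ref{pro:serrors_interm} and Lemmas \ref{lemma:serrors_general}--\ref{lemma:serrors_general_interm2}). The genuine gap is the final induction, which you correctly flag as the main obstacle but do not resolve: closing the loop by inserting the rate $e_t\lesssim(\eta t)^{-2\zeta}+\eta^{\gamma+1}t^\gamma\ln t$ as the inductive hypothesis provably cannot work once $2\zeta>1$. Take $\zeta=\gamma=1$ and $\eta=\eta_1T^{-3/4}$, which satisfies \eref{etaT} for large $T$; the target rate is then of order $T^{-1/2}\ln T$, while the feedback sum evaluated on your hypothesis already contains, from the terms $t\le T/2$,
\[
\eta^2\sum_{t\le T/2}\tr\bigl(\Pi_{t+1}^T(L_K)^2L_K\bigr)\,(\eta t)^{-2\zeta}
\;\gtrsim\; c_\gamma\,\eta^{1+\gamma-2\zeta}\,T^{\gamma-1}\sum_{t\le T/2}t^{-2\zeta}\;=\;\Theta(1),
\]
since $1+\gamma-2\zeta=0$ and $\gamma-1=0$ in this case. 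A term of constant order cannot be absorbed into a target that tends to zero, no matter how the constant $C$ or the condition \eref{etaT} is tuned. The root cause is that the hypothesis $(\eta t)^{-2\zeta}$ is far too generous at small $t$ (it is of size $\eta^{-2\zeta}$ at $t=1$), whereas the true error there is only $\|f_\rho\|_\rho^2$; feeding this inflated value through the trace kernel destroys the induction.

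The paper closes the loop differently, and this is the missing idea: the induction is run at the level of a uniform \emph{constant} bound, not at the level of the rate. Proposition \ref{pro:initialerror} provides, besides the rate bound, the uniform estimate \eref{initialerrormax}, $\|S_1(t)\|_\rho\le 2\|f_\rho\|_\rho$ (valid because $\|\Pi_1^t(L_{K,\lambda})\|\le1$). Combining this with the smallness condition \eref{restriction} (which is exactly what \eref{etaT} guarantees), Proposition \ref{pro:totalerror_inter} proves by induction the self-consistent bound $\sup_{k}\mE\|r_{k}\|_\rho^2\le 8\|f_\rho\|_\rho^2+\mcE(f_\rho)$, the point being the fixed-point identity $4\|f_\rho\|_\rho^2+\tfrac12\bigl(8\|f_\rho\|_\rho^2+2\mcE(f_\rho)\bigr)=8\|f_\rho\|_\rho^2+\mcE(f_\rho)$. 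Only after the feedback has been absorbed at this constant level does the proof of Theorem \ref{thm:general_fixeta_un} insert the rate bound $\|S_1(T)\|_\rho^2\le R^2\zeta^{2\zeta}(\eta(T)T)^{-2\zeta}$, once, outside the loop. Replacing your rate-level induction by this two-bound strategy (uniform bound inside the induction, rate bound applied at the end) repairs the argument; as written, your induction step fails for all $\zeta>1/2$.
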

According to \eref{etaT}, to derive a convergence result from the above theorem, one can  choose $\eta(T) = \eta_1 T^{-\alpha},$ with  ${\gamma \over 1+\gamma} < \alpha<1$ for some appropriate $\eta_1.$ The error bound \eref{Terror_etaT} is composed of two terms, which arise from estimating the initial and sample errors respectively in our proof, and are controlled by $\eta(T)$ directly.
A bigger $\eta(T)$ may lead to a smaller initial error but may enlarge the sample error, while a smaller $\eta(T)$ may reduce the sample error but may enlarge the initial error.   Solving this trade-off  leads to the best rate obtainable from the above theorem, which is stated  next.

\begin{corollary}\label{cor:fixedStepsCap}
\label{cor:eta}
Under Assumptions \ref{as:boundedness}, \ref{as:regularity}  and \ref{as:eigenvalues}, let $\{f_{t}\}_{t\in [T]}$ be generated by \eref{Alg} with $\lambda=0$ and
\be\label{etaTS}
 \eta_t = {\zeta \over 4\kappa^2 (c_{\gamma} + \kappa^2)(\zeta+1)} T^{-{\gamma + 2\zeta \over \gamma + 2\zeta + 1}}, \qquad \forall t \in [T].
\ee
Then,
\be
\mE [\|f_{T+1} - f_{\rho}\|_{\rho}^2] \leq O( T^{ -{ 2\zeta \over 2\zeta + \gamma + 1}} \ln T ). \ee
\end{corollary}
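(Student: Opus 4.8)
The plan is to obtain this corollary directly from Theorem~\ref{thm:general_fixeta_un} by choosing the constant step-size $\eta(T)$ so as to optimally balance the two contributions in the error bound \eref{Terror_etaT}. Writing $\eta(T) = \eta_1 T^{-\alpha}$ with constants $\eta_1>0$ and $\alpha \in (0,1)$ to be fixed, the initial-error term becomes $(\eta_1 T^{1-\alpha})^{-2\zeta} = \eta_1^{-2\zeta} T^{-2\zeta(1-\alpha)}$ and the sample-error term becomes $\eta_1^{\gamma+1} T^{\gamma - \alpha(\gamma+1)} \ln T$. The whole proof then amounts to picking $\alpha$ and $\eta_1$ and checking the admissibility hypothesis \eref{etaT}.

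First I would fix $\alpha$ by equating the two powers of $T$, that is, by solving $-2\zeta(1-\alpha) = \gamma - \alpha(\gamma+1)$. This yields $\alpha = \frac{2\zeta + \gamma}{2\zeta + \gamma + 1}$, precisely the exponent prescribed in \eref{etaTS}, and for this value both terms scale as $T^{-2\zeta/(2\zeta+\gamma+1)}$, with the sample term carrying an extra $\ln T$. Hence \eref{Terror_etaT} collapses to the asserted rate $O(T^{-2\zeta/(2\zeta+\gamma+1)} \ln T)$. A quick sanity check shows $\frac{\gamma}{1+\gamma} < \alpha < 1$ (the left inequality reduces to $2\zeta>0$ and the right to $\zeta>0$), so $\eta(T)$ genuinely decays while $\eta(T)T \to \infty$, which is what makes the initial term vanish.

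It then remains to fix the prefactor and verify feasibility. With $C_0 := \frac{1}{4\kappa^2(c_\gamma+\kappa^2)}$, the step-size in \eref{etaTS} is $\eta(T) = \frac{\zeta}{\zeta+1} C_0 T^{-\alpha}$, and substituting into the left-hand side of \eref{etaT} gives $\left(\frac{\zeta}{\zeta+1}\right)^{\gamma+1} C_0^{\gamma+1} T^{-\beta} \ln(2T)$ with $\beta = 2\zeta/(2\zeta+\gamma+1)$. Thus \eref{etaT} reduces to the scalar inequality $\left(\frac{\zeta}{\zeta+1}\right)^{\gamma+1} C_0^{\gamma} T^{-\beta} \ln(2T) \le 1$ for all $T \in \mN$. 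I expect this uniform-in-$T$ verification to be the only real obstacle: the factor $\ln(2T)$ rules out a trivial bound, and since $\max_{T\ge 1} T^{-\beta}\ln(2T) = 2^\beta/(e\beta)$ blows up like $1/\beta$ as $\zeta \to 0$, one must exploit that the specific prefactor $\frac{\zeta}{\zeta+1}$ is engineered to compensate—because $\beta$ and $\frac{\zeta}{\zeta+1}$ are both of order $\zeta$ for small $\zeta$, the product $\left(\frac{\zeta}{\zeta+1}\right)^{\gamma+1} C_0^\gamma \cdot 2^\beta/(e\beta)$ stays below one (using also that $\kappa \ge 1$ forces $C_0 \le 1/4$). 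Carrying out this elementary calculus estimate confirms \eref{etaT}, and the bound \eref{Terror_etaT} then delivers the stated rate with the constant tracked explicitly.
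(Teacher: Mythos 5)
Your proposal is correct and follows essentially the same route as the paper: both derive the exponent $\alpha = {\gamma+2\zeta \over \gamma+2\zeta+1}$ by balancing the two terms of \eref{Terror_etaT} and then reduce the whole proof to verifying \eref{etaT}, which both do via the same calculus estimate $T^{-\beta}\ln(2T) \leq 2^{\beta}/(e\beta)$ (the paper invokes \eref{exppoly}), using the engineered prefactor ${\zeta \over \zeta+1}$ together with $\gamma\leq 1$ and $\kappa^2\geq 1$ to absorb the $1/\beta$ blow-up, since ${\zeta \over \zeta+1}\cdot{2\zeta+\gamma+1 \over 2\zeta} = {2\zeta+\gamma+1 \over 2(\zeta+1)} \leq 1$. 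Your sketch of this last step is correct for all $\zeta>0$ (not only small $\zeta$), so the argument matches the paper's proof in substance.
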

The above corollary asserts that with an appropriate fixed step-size, the doubly stochastic learning algorithm without regularization achieves generalization error bounds of order $O( T^{ -{ 2\zeta \over 2\zeta + \gamma + 1}} \ln T ).$

As mentioned before, Assumption \ref{as:eigenvalues} is always satisfied with $c_{\gamma} = \kappa^2$ and $\gamma=1$, which is called as the capacity independent case. Setting
$c_{\gamma} = \kappa^2$ and $\gamma=1$ in Corollary \ref{cor:fixedStepsCap}, we have the following results in the capacity independent cases.

\begin{corollary}
\label{cor:eta1}
Under Assumptions \ref{as:boundedness} and \ref{as:regularity}, let $\{f_{t}\}_{t\in [T]}$ be generated by \eref{Alg} with $\lambda=0$ and
\bea
 \eta_t = {\zeta \over 8\kappa^4 (\zeta+1)} T^{-{ 2\zeta + 1 \over 2\zeta + 2}}, \qquad \forall t \in [T].
\eea
Then,
\bea
\mE [\|f_{T+1} - f_{\rho}\|_{\rho}^2] \leq O( T^{ -{ \zeta \over \zeta + 1}} \ln T ). \eea
\end{corollary}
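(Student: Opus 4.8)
The plan is to obtain this statement as a direct specialization of Corollary \ref{cor:fixedStepsCap} to the capacity independent regime, so that no new analytic work is required. The crucial observation, already recorded in the discussion following Assumption \ref{as:eigenvalues}, is that the effective dimension bound \eref{eigenvalue_decay} holds automatically with $\gamma=1$ and $c_{\gamma}=\kappa^2$: under Assumption \ref{as:boundedness} the operator $L_K$ is trace class with eigenvalues $\{\sigma_i\}$ satisfying $\sum_i\sigma_i=\tr(L_K)\le\kappa^2$, whence $\tr(L_K(L_K+\lambda I)^{-1})=\sum_i\frac{\sigma_i}{\sigma_i+\lambda}\le\frac{1}{\lambda}\sum_i\sigma_i\le\kappa^2\lambda^{-1}$ for every $\lambda>0$. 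Consequently, Assumptions \ref{as:boundedness} and \ref{as:regularity} alone already guarantee that all three hypotheses of Corollary \ref{cor:fixedStepsCap} are met with the fixed choice $\gamma=1$, $c_{\gamma}=\kappa^2$, which is exactly why Assumption \ref{as:eigenvalues} does not appear in the present statement.

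Next I would substitute $\gamma=1$ and $c_{\gamma}=\kappa^2$ into the prescribed step-size of Corollary \ref{cor:fixedStepsCap} and check that it collapses to the one displayed here. For the leading constant, $4\kappa^2(c_{\gamma}+\kappa^2)=4\kappa^2\cdot 2\kappa^2=8\kappa^4$, so that $\frac{\zeta}{4\kappa^2(c_{\gamma}+\kappa^2)(\zeta+1)}=\frac{\zeta}{8\kappa^4(\zeta+1)}$; for the exponent, $\frac{\gamma+2\zeta}{\gamma+2\zeta+1}=\frac{2\zeta+1}{2\zeta+2}$. These two elementary simplifications reproduce verbatim the step-size $\eta_t=\frac{\zeta}{8\kappa^4(\zeta+1)}T^{-(2\zeta+1)/(2\zeta+2)}$ in the statement, so Corollary \ref{cor:fixedStepsCap} applies directly to this sequence.

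Finally I would read off the rate. The bound furnished by Corollary \ref{cor:fixedStepsCap} is $O(T^{-2\zeta/(2\zeta+\gamma+1)}\ln T)$, and setting $\gamma=1$ gives the exponent $\frac{2\zeta}{2\zeta+2}=\frac{\zeta}{\zeta+1}$, yielding $\mE[\|f_{T+1}-f_{\rho}\|_{\rho}^2]\le O(T^{-\zeta/(\zeta+1)}\ln T)$ as claimed. I expect essentially no obstacle here: this is a mechanical corollary, and the only care needed is in the elementary arithmetic simplifying the constant $8\kappa^4$ and the two exponents. All the substantive effort—solving the bias/variance trade-off that fixes the step-size in Corollary \ref{cor:fixedStepsCap}, and, upstream of that, establishing the general bound of Theorem \ref{thm:general_fixeta_un} together with the trace-norm sample-error estimate it relies on—has already been carried out before reaching this point.
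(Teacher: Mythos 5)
Your proposal is correct and follows exactly the paper's own route: the paper derives Corollary \ref{cor:eta1} precisely by noting that Assumption \ref{as:eigenvalues} holds automatically with $\gamma=1$, $c_{\gamma}=\kappa^2$ (since $\tr(L_K)\leq\kappa^2$) and then substituting these values into Corollary \ref{cor:fixedStepsCap}. Your arithmetic checks ($4\kappa^2(c_{\gamma}+\kappa^2)=8\kappa^4$, the step-size exponent $\frac{2\zeta+1}{2\zeta+2}$, and the rate exponent $\frac{\zeta}{\zeta+1}$) all match the statement, so nothing is missing.
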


The above corollary can be further simplified as follows if we consider the special case $f_{\rho} \in \mcHK,$ i.e, Assumption \ref{as:regularity} with $\zeta =1/2.$

\begin{corollary}
\label{cor:eta2}
Under Assumption \ref{as:boundedness}, let $f_{\rho} \in \mcHK$ and $\{f_{t}\}_{t\in [T]}$ be generated by \eref{Alg} with $\lambda=0$ and
$
 \eta_t = 1 / (24\kappa^4 \sqrt[3]{T^2}), \forall t \in [T].
$
Then,
\bea
\mE[\|f_{T+1} - f_{\rho}\|_{\rho}^2] \leq O( T^{ -{ 1\over 3}} \ln T ). \eea
\end{corollary}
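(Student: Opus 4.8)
The plan is to derive this statement as a direct specialization of Corollary~\ref{cor:eta1} at the regularity index $\zeta = 1/2$; no new estimate is needed beyond what that corollary already provides. The first step is to check that the stated hypotheses match those of Corollary~\ref{cor:eta1}. Assumption~\ref{as:boundedness} is assumed outright, so only the regularity condition must be verified. By \eref{isometry} we have $L_K^{1/2}(\mcL^2_{\rho}) = \mcHK$, hence $f_{\rho} \in \mcHK$ is equivalent to the existence of some $g \in \mcL^2_{\rho}$ with $f_{\rho} = L_K^{1/2} g$; setting $R = \|g\|_{\rho} = \|L_K^{-1/2} f_{\rho}\|_{\rho} < \infty$ shows that Assumption~\ref{as:regularity} holds with $\zeta = 1/2$. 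Thus both assumptions required by Corollary~\ref{cor:eta1} are in force.

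The second step is pure bookkeeping: substitute $\zeta = 1/2$ into the step-size and rate prescribed by Corollary~\ref{cor:eta1}. For the step-size, the leading constant becomes
\[
\frac{\zeta}{8\kappa^4(\zeta+1)} = \frac{1/2}{8\kappa^4 \cdot (3/2)} = \frac{1}{24\kappa^4},
\]
while the exponent collapses to $\frac{2\zeta+1}{2\zeta+2} = \frac{2}{3}$, so that $T^{-(2\zeta+1)/(2\zeta+2)} = T^{-2/3} = 1/\sqrt[3]{T^2}$. This reproduces exactly $\eta_t = 1/(24\kappa^4 \sqrt[3]{T^2})$. For the rate, the exponent in $O(T^{-\zeta/(\zeta+1)} \ln T)$ becomes $\frac{\zeta}{\zeta+1} = \frac{1/2}{3/2} = \frac{1}{3}$, yielding $\mE[\|f_{T+1} - f_{\rho}\|_{\rho}^2] \leq O(T^{-1/3} \ln T)$, which is the claimed bound.

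I do not expect any genuine obstacle here: all the analytic content lives upstream in Theorem~\ref{thm:general_fixeta_un} (via the error decomposition, the trace-norm sample-error bound, and the refined initial-error estimate) and in the trade-off optimization carried out in Corollary~\ref{cor:fixedStepsCap} and then Corollary~\ref{cor:eta1}. The only point that deserves a line of care is the identification of the abstract source condition $\|L_K^{-1/2} f_{\rho}\|_{\rho} \leq R$ with the transparent membership $f_{\rho} \in \mcHK$, which is precisely what \eref{isometry} delivers.
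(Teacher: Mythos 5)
Your proposal is correct and is exactly the route the paper takes: Corollary~\ref{cor:eta2} is obtained by specializing Corollary~\ref{cor:eta1} to $\zeta = 1/2$, using \eref{isometry} to identify $f_{\rho} \in \mcHK$ with Assumption~\ref{as:regularity} at $\zeta = 1/2$, and your arithmetic for the step-size constant $1/(24\kappa^4)$, the exponent $2/3$, and the rate $T^{-1/3}\ln T$ all checks out.
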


 Theorem \ref{thm:general_fixeta_un} and its corollaries provide generalization error bounds for  the studied algorithm without regularization in the fixed step-size setting. In the next theorem, we give generalization error bounds for the studied algorithm \eref{Alg} without regularization in a decaying step-size setting.

\begin{thm}
\label{thm:general_theta}
Under Assumptions \ref{as:boundedness},\ref{as:regularity} and \ref{as:eigenvalues},
 let $\gamma \neq 1,$ $\lambda=0$ and $\eta_t = \eta_1 t^{-\theta}$ for all $t \in \mN$ with ${\gamma \over \gamma+1}<\theta <1$ and $\eta_1$ such that
\be\label{eta1restriction}
0< \eta_1 \leq {1 \over 4\kappa^2 (2^{2\theta}c_{\gamma} + \kappa^2) c_{\theta,\gamma}},
\ee
where
 \be\label{cthetagamma} c_{\theta,\gamma} = \max_{t \in[T] }\left\{  t^{\gamma -\theta(\gamma + 1) + (2\theta-1)_+}
    \ln(\mathrm{2}t)\right\}. \ee
Then, for any $t\in[T],$
\be
\mE [\|f_{t+1} - f_{\rho}\|_{\rho}^2] \leq O( t^{2\zeta (\theta - 1)}  +  t^{\gamma - \theta(\gamma+1) + (2\theta - 1)_+} \ln t). \ee
\end{thm}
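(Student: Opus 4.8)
The plan is to recast the recursion \eref{Alg} (with $\lambda=0$) in terms of the integral operator $L_K$, split the excess risk into a deterministic \emph{initial error} and a stochastic \emph{sample error} via the product operators $\Pi_{k+1}^t(L_K)$, bound each by spectral calculus and the three assumptions, and finally close the estimate by induction. Write $e_t=f_t-f_\rho$ and let $\mE_t$ denote expectation over the fresh draw $(x_t,y_t,v_t)$ given the past. Integrating out $v_t$ using \eref{kernel_integration} turns $\phi_{v_t}(x_t)\phi_{v_t}$ into $K_{x_t}$ in expectation, and then \eref{regressionfunc} gives $\mE_t[(f_t(x_t)-y_t)\phi_{v_t}(x_t)\phi_{v_t}]=L_K e_t$. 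Hence, setting $\chi_t=L_Ke_t-(f_t(x_t)-y_t)\phi_{v_t}(x_t)\phi_{v_t}$, the iteration reads $e_{t+1}=(I-\eta_tL_K)e_t+\eta_t\chi_t$ with $\mE_t[\chi_t]=0$, and unrolling from $f_1=0$ yields the exact decomposition
\[
e_{t+1}=-\Pi_1^t(L_K)f_\rho+\sum_{k=1}^t\eta_k\,\Pi_{k+1}^t(L_K)\chi_k .
\]
Since $\{\chi_k\}$ is a martingale difference sequence in $\mcL^2_\rho$ and the first term is deterministic, all cross terms vanish and $\mE\|e_{t+1}\|_\rho^2=\|\Pi_1^t(L_K)f_\rho\|_\rho^2+\sum_{k=1}^t\eta_k^2\,\mE\|\Pi_{k+1}^t(L_K)\chi_k\|_\rho^2$.

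For the initial error I would invoke Assumption \ref{as:regularity} to write $f_\rho=L_K^\zeta(L_K^{-\zeta}f_\rho)$, so that $\|\Pi_1^t(L_K)f_\rho\|_\rho\le R\,\|\Pi_1^t(L_K)L_K^\zeta\|$. As $L_K$ is symmetric positive with spectrum in $[0,\kappa^2]$ and \eref{eta1restriction} forces $\eta_k\kappa^2<1$, spectral calculus gives $\|\Pi_1^t(L_K)L_K^\zeta\|\le\sup_{\sigma>0}\sigma^\zeta e^{-\sigma\sum_{k=1}^t\eta_k}\lesssim(\sum_{k=1}^t\eta_k)^{-\zeta}$; with $\eta_k=\eta_1k^{-\theta}$ one has $\sum_{k=1}^t\eta_k\gtrsim t^{1-\theta}$, producing the first term $t^{2\zeta(\theta-1)}$. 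A more careful (``refined'') treatment of this supremum, retaining the precise constants and handling the small-$t$ regime where the spectral maximiser leaves $[0,\kappa^2]$, is what keeps the constant tame and makes the same spectral estimates applicable to the truncated products $\Pi_{k+1}^t(L_K)$ below.

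The heart of the argument is the sample error. Conditioning on $f_k$, subtracting the mean, and bounding $\phi_{v_k}(x_k)^2\le\kappa^2$ (Assumption \ref{as:boundedness}) decouples the residual $(f_k(x_k)-y_k)^2$ from the independent feature $v_k$, yielding
\[
\mE_k\|\Pi_{k+1}^t(L_K)\chi_k\|_\rho^2\le\kappa^2\,\mcE(f_k)\,\mE_{v_k}\|\Pi_{k+1}^t(L_K)\phi_{v_k}\|_\rho^2=\kappa^2\,\mcE(f_k)\,\tr\!\big(\Pi_{k+1}^t(L_K)^2L_K\big),
\]
where I use that the second-moment operator $g\mapsto\mE_v[\la\phi_v,g\ra_\rho\,\phi_v]$ of the random features coincides with $L_K$ on $\mcL^2_\rho$, a consequence of \eref{kernel_integration}. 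Thus the sample error is controlled by the trace norm of the cumulative operator $\sum_k\eta_k^2\,\Pi_{k+1}^t(L_K)^2L_K$. To turn this into a capacity-dependent rate I would combine the elementary inequality $\sigma e^{-2\sigma S}\le\frac{1}{2S}\,\frac{\sigma}{\sigma+1/(2S)}$ with Assumption \ref{as:eigenvalues} to get $\tr(\Pi_{k+1}^t(L_K)^2L_K)\lesssim c_\gamma\,S_{k,t}^{\gamma-1}$ with $S_{k,t}=\sum_{j=k+1}^t\eta_j$ (and $\le\kappa^2$ at $k=t$), and then carry out the summation estimate $\sum_{k=1}^{t-1}\eta_k^2S_{k,t}^{\gamma-1}\lesssim\eta_1^{\gamma+1}\,t^{\gamma-\theta(\gamma+1)+(2\theta-1)_+}\ln t$, splitting the range at $t/2$: the head produces the $(2\theta-1)_+$ exponent and the logarithm (sharp at $2\theta=1$), the tail the $t^{\gamma-\theta(\gamma+1)}$ contribution. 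This is where $\gamma\neq1$ is essential, since $S_{k,t}^{\gamma-1}$ ceases to decay when $\gamma=1$.

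Combining the two bounds and using \eref{excesserror} in the form $\mcE(f_k)=\mcE(f_\rho)+\|e_k\|_\rho^2$ gives the recursive inequality
\[
\mE\|e_{t+1}\|_\rho^2\le\|\Pi_1^t(L_K)f_\rho\|_\rho^2+\kappa^2\sum_{k=1}^t\eta_k^2\,\tr\!\big(\Pi_{k+1}^t(L_K)^2L_K\big)\big(\mcE(f_\rho)+\mE\|e_k\|_\rho^2\big).
\]
The first two pieces are exactly the two claimed terms; the difficulty, and what I expect to be the main obstacle, is the feedback term involving $\mE\|e_k\|_\rho^2$, which couples the error at step $t+1$ to all earlier errors. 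I would close this by induction, assuming $\mE\|e_k\|_\rho^2\le B_{k-1}$ for a decaying bound $B_k$ of order $k^{2\zeta(\theta-1)}+k^{\gamma-\theta(\gamma+1)+(2\theta-1)_+}\ln(2k)$ and again splitting the feedback sum at $t/2$. The tail $k>t/2$ is absorbed because the \emph{total} feedback coefficient $\kappa^2\sum_k\eta_k^2\tr(\cdot)$ is forced strictly below $1$ by the step-size restriction \eref{eta1restriction}, whose right-hand side is built precisely from $c_{\theta,\gamma}$ in \eref{cthetagamma} and the $\eta_1^{\gamma+1}$ scaling of the sample-error sum; the head $k\le t/2$ contributes only at the sample rate because $S_{k,t}^{\gamma-1}$ decays in $t$ even for fixed $k$. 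Taking the constant in $B_k$ large enough then absorbs every contribution and closes the induction, giving the asserted bound for all $t\in[T]$.
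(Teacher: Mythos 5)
Your proposal is correct and follows essentially the same route as the paper's proof: the same martingale error decomposition (Proposition \ref{pro:errordecomposition}), the same spectral bound $\|\Pi_1^t(L_K)L_K^{\zeta}\|\lesssim (\sum_k\eta_k)^{-\zeta}$ for the initial error (Lemmas \ref{lemma:initialerror2}--\ref{lemma:initialerror1}, Proposition \ref{pro:initialerror}), the same reduction of the sample error to $\tr\bigl(\Pi_{k+1}^t(L_K)^2L_K\bigr)$ controlled through the effective dimension and a head/tail split of the sum at $t/2$ (Lemmas \ref{lemma:serrors_general}--\ref{lemma:serrors_general_interm2}, Proposition \ref{pro:TraceSumEst}), and an induction closed by the step-size restriction \eref{eta1restriction} forcing the feedback coefficient below $1/2$. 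The only difference is cosmetic: your induction carries a decaying bound $B_k$, whereas the paper's Proposition \ref{pro:totalerror_inter} only needs the uniform constant bound $\mE\|r_k\|_{\rho}^2\leq 8\|f_{\rho}\|_{\rho}^2+\mcE(f_{\rho})$ (from \eref{initialerrormax} and \eref{restriction}), which suffices because the feedback coefficient is itself exactly the sample-error sum, so a constant bound on past errors already yields the claimed rate.
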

Similarly, there is a trade-off problem in the error bounds of the above theorem.
 Balancing the last two terms of the error bounds, we get the following corollary.
\begin{corollary}\label{cor:decayStepSize}
\label{cor:theta}
Under Assumptions \ref{as:boundedness}, \ref{as:regularity} and \ref{as:eigenvalues},  let $\gamma \neq 1,$ $\lambda=0$ and $\eta_t = \eta_1 t^{-\theta}$ for all $t \in [T]$.\\
 a) If $2\zeta < 1 - \gamma$, then by selecting $\theta = {2\zeta + \gamma \over 2\zeta + \gamma + 1}$ and $\eta_1 = {\zeta \over 3\kappa^2 (2 c_{\gamma}+ \kappa^2)},$
  \be
\mE [\|f_{t+1} - f_{\rho}\|_{\rho}^2] \leq O( t^{ -{ 2\zeta \over 2\zeta + \gamma + 1}} \ln t ). \ee
b) If $2\zeta \geq 1 - \gamma$, then by selecting $\theta = {1/2}$ and $\eta_1 = {1-\gamma \over 6\kappa^2 (2c_{\gamma} +\kappa^2)},$
  \be
\mE [\|f_{t+1} - f_{\rho}\|_{\rho}^2] \leq O( t^{\gamma -1 \over 2} \ln t ). \ee
\end{corollary}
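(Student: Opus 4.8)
The plan is to derive this corollary purely by optimizing the free exponent $\theta$ in the two-term bound of Theorem \ref{thm:general_theta}; no new analytic input is required. Writing the two exponents there as functions of $\theta$, namely the initial-error exponent $a(\theta) = 2\zeta(\theta-1)$ and the sample-error exponent $b(\theta) = \gamma - \theta(\gamma+1) + (2\theta-1)_+$, the theorem reads $\mE[\|f_{t+1}-f_\rho\|_\rho^2] \leq O(t^{a(\theta)} + t^{b(\theta)}\ln t)$. Since $\theta<1$ both exponents are negative, so making the dominant (slower-decaying) term decay as fast as possible amounts to minimizing $\max\{a(\theta),b(\theta)\}$ over the admissible range $\theta \in (\gamma/(\gamma+1),1)$.

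First I would record the shape of the two functions. The map $a$ is affine, strictly increasing, and vanishes at $\theta=1$. The map $b$ is piecewise affine with a kink at $\theta=1/2$: on $(\gamma/(\gamma+1),1/2]$ it equals $\gamma-\theta(\gamma+1)$ and decreases, while on $[1/2,1)$ it equals $(1-\gamma)(\theta-1)$ and increases, so $b$ attains its global minimum $(\gamma-1)/2$ at $\theta=1/2$. Solving $a(\theta)=b(\theta)$ on the branch $\theta\le 1/2$ gives the interior balance point $\theta^\star = (2\zeta+\gamma)/(2\zeta+\gamma+1)$, at which both exponents equal $-2\zeta/(2\zeta+\gamma+1)$; a direct check shows $\theta^\star\le 1/2$ exactly when $2\zeta\le 1-\gamma$. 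This is precisely the dichotomy in the statement.

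In case (a), $2\zeta<1-\gamma$, the balance point $\theta^\star<1/2$ lies strictly inside the admissible interval (the inequality $\gamma/(\gamma+1)<\theta^\star$ reduces after cross-multiplying to $\zeta>0$), and there $\max\{a,b\}=-2\zeta/(2\zeta+\gamma+1)$, yielding the claimed $O(t^{-2\zeta/(2\zeta+\gamma+1)}\ln t)$ once the $\ln t$ from the sample term is absorbed. In case (b), $2\zeta\ge 1-\gamma$, the unconstrained balance point would exceed $1/2$, where $b$ is already increasing, so $\max\{a,b\}$ is minimized at the kink $\theta=1/2$; there $a(1/2)=-\zeta\le(\gamma-1)/2=b(1/2)$ (this inequality being equivalent to $2\zeta\ge 1-\gamma$), hence the sample term dominates and the rate is $O(t^{(\gamma-1)/2}\ln t)$. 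The hypothesis $\gamma\neq 1$ is what guarantees both $\theta=1/2>\gamma/(\gamma+1)$ and that the exponents are genuinely negative.

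The remaining and most delicate task is checking that the prescribed constants $\eta_1$ satisfy the admissibility requirement \eref{eta1restriction}, i.e. $\eta_1 \le [4\kappa^2(2^{2\theta}c_\gamma+\kappa^2)c_{\theta,\gamma}]^{-1}$. This forces an upper bound, independent of $T$, on $c_{\theta,\gamma} = \max_{t\in[T]} t^{\,\gamma-\theta(\gamma+1)+(2\theta-1)_+}\ln(2t)$: since the exponent equals $b(\theta)<0$ at the chosen $\theta$, the map $t\mapsto t^{b(\theta)}\ln(2t)$ peaks at $t=e^{1/|b(\theta)|}/2$, giving $c_{\theta,\gamma}\le e^{-1}2^{|b(\theta)|}/|b(\theta)|$, which I can then compare with the explicit $\eta_1$ after using $2^{2\theta}\le 2$ in case (a) and $2^{2\theta}=2$ in case (b). I expect this constant bookkeeping to be the main obstacle: the exponent balancing is clean algebra, but threading the numerical factors (the tuned $1/3$ and $1/6$ and the $\zeta$, $1-\gamma$ numerators) through \eref{eta1restriction} so that the inequality closes is where the care lies.
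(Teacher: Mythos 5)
Your proposal matches the paper's proof essentially step for step: the paper likewise balances the two exponents of Theorem \ref{thm:general_theta} at $\theta^\star=(2\zeta+\gamma)/(2\zeta+\gamma+1)$ in case (a) (resp.\ at the kink $\theta=1/2$ in case (b)), and verifies \eref{eta1restriction} by bounding $c_{\theta,\gamma}$ through the maximum of $t^{-\beta}\ln(2t)$ via \eref{exppoly}, which is exactly your $e^{-1}2^{|b(\theta)|}/|b(\theta)|$ bound. The bookkeeping you flagged does close just as you anticipate: using $2^{2\theta}\le 2$ and $2^{|b(\theta)|}\le\sqrt{2}$, condition \eref{eta1restriction} reduces to $\eta_1\le\bigl(\theta(\gamma+1)-\gamma\bigr)/\bigl(3\kappa^2(2c_\gamma+\kappa^2)\bigr)$, which the prescribed $\eta_1$ satisfies precisely because $2\zeta\le 1-\gamma$ in case (a) and with equality in case (b).
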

Corollary \ref{cor:theta} asserts that with an appropriate choice of the decaying exponent for the step-size, the doubly stochastic learning algorithm without regularization has a generalization error bound of order $O( T^{ -{ 2\zeta \over 2\zeta + \gamma + 1}} \ln T )$ when $2\zeta < 1 - \gamma$, or of order $O( T^{\gamma -1 \over 2} \ln T )$ when $2\zeta \geq 1 - \gamma$.
Comparing Corollary \ref{cor:fixedStepsCap} with Corollary \ref{cor:decayStepSize},
the latter has a slower convergence rate when $2\zeta \geq 1-\gamma.$  This suggests that the fixed step-size setting may be more favourable.

Theorems \ref{thm:general_fixeta_un} and \ref{thm:general_theta} provide generalization error bounds for doubly stochastic learning algorithms without regularization. In the next theorem, we give generalization error bounds for doubly stochastic learning algorithms with regularization.

\begin{thm}
\label{thm:general_lambda}
Under Assumptions \ref{as:boundedness}, \ref{as:regularity} and \ref{as:eigenvalues}, let $\zeta \leq 1$, $\gamma \neq 1,$ $\lambda = T^{\theta-1+ \epsilon}$, $\eta_t = \eta_1 t^{-\theta}$ for all $t \in \mN$, with ${\gamma \over \gamma+1}<\theta <1$, $0< \epsilon \leq 1-\theta,$ and $\eta_1$ such that
\eref{eta1restriction}.
Then,
\be
\mE_{\{z_j,v_j\}_{j=1}^T} [\|f_{T+1} - f_{\rho}\|_{\rho}^2] \leq O(T^{2 \zeta (\theta-1 +\epsilon)}  +  T^{\gamma - \theta(\gamma+1)} \ln T). \ee
\end{thm}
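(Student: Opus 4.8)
The plan is to mirror the scheme behind Theorems~\ref{thm:general_fixeta_un} and~\ref{thm:general_theta}: introduce a deterministic mean path, split the error into an initial (bias) part and a sample (variance) part, and close the estimate with an induction controlling $\mE[\mcE(f_t)]$ along the trajectory. Since $v_t$ is independent of $(x_t,y_t)$ and $\mE_v[\phi_v(x)\phi_v]=K_x$ by~\eref{kernel_integration}, taking conditional expectations in~\eref{Alg} shows that the population recursion is $g_{t+1}=(I-\eta_t(L_K+\lambda I))g_t+\eta_t L_K f_\rho$ with $g_1=0$, whose fixed point is the regularized target $f_\lambda:=(L_K+\lambda I)^{-1}L_K f_\rho$. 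I would then write $f_{T+1}-f_\rho=(f_{T+1}-g_{T+1})+(g_{T+1}-f_\rho)$ and bound the two terms separately.

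For the initial error I would split further as $g_{T+1}-f_\rho=(g_{T+1}-f_\lambda)+(f_\lambda-f_\rho)$. Because $g_{T+1}-f_\lambda=-\Pi_{1}^{T}(L_K+\lambda I)f_\lambda$, writing $f_\rho=L_K^\zeta h$ with $\|h\|_\rho\le R$ (Assumption~\ref{as:regularity}) and using spectral calculus yields $\|g_{T+1}-f_\lambda\|_\rho\lesssim\big(\sum_{k=1}^T\eta_k\big)^{-\zeta}\sim T^{\zeta(\theta-1)}$, while the regularization bias is $\|f_\lambda-f_\rho\|_\rho=\|\lambda(L_K+\lambda I)^{-1}L_K^{\zeta}h\|_\rho\le R\lambda^{\zeta}$. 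Here the hypothesis $\zeta\le 1$ is exactly what prevents saturation of $\sigma\mapsto \lambda\sigma^\zeta/(\sigma+\lambda)$ and delivers the clean rate $\lambda^{2\zeta}=T^{2\zeta(\theta-1+\epsilon)}$, which dominates $T^{2\zeta(\theta-1)}$ since $\epsilon>0$. This recovers the first term of the claimed bound.

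The core of the argument is the sample error. Subtracting the two recursions gives $f_{t+1}-g_{t+1}=(I-\eta_t(L_K+\lambda I))(f_t-g_t)+\eta_t\chi_t$ with $\chi_t=L_K(f_t-f_\rho)-(f_t(x_t)-y_t)\phi_{v_t}(x_t)\phi_{v_t}$, which is a martingale difference, $\mE[\chi_t\mid\mcF_{t-1}]=0$. Unrolling and using that the operators $\Pi_{t+1}^{T}(L_K+\lambda I)$ are deterministic (so the martingale differences are orthogonal in $\mcL^2_\rho$) gives $\mE\|f_{T+1}-g_{T+1}\|_\rho^2=\sum_{t=1}^T\eta_t^2\,\mE\|\Pi_{t+1}^{T}(L_K+\lambda I)\chi_t\|_\rho^2$. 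The key step---and the novelty flagged in the introduction---is to avoid the crude bound $\|\Pi\chi_t\|_\rho^2\le\|\Pi\|^2\|\chi_t\|_\rho^2$, which discards the capacity information and produces the spurious $(2\theta-1)_+$ exponent of Theorem~\ref{thm:general_theta}. Instead I would establish the operator inequality $\mE[\chi_t\otimes\chi_t\mid\mcF_{t-1}]\preceq\kappa^2\mcE(f_t)L_K$ (where $u\otimes u:=\la u,\cdot\ra_\rho u$), which follows from $\phi_v(x)^2\le\kappa^2$, the independence of $v_t$ from $(x_t,y_t)$, and the identity $\mE_v[\phi_v\otimes\phi_v]=L_K$. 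Writing $\|\Pi_{t+1}^{T}(L_K+\lambda I)\chi_t\|_\rho^2=\tr\!\Big(\big(\Pi_{t+1}^{T}(L_K+\lambda I)\big)^2(\chi_t\otimes\chi_t)\Big)$ and taking expectations then reduces everything to $\mE\|f_{T+1}-g_{T+1}\|_\rho^2\le\kappa^2\sum_{t=1}^T\eta_t^2\,\mE[\mcE(f_t)]\,\tr\!\Big(\big(\Pi_{t+1}^{T}(L_K+\lambda I)\big)^2 L_K\Big)$, i.e.\ to the trace of the cumulative operator $\sum_t\eta_t^2\,\big(\Pi_{t+1}^{T}(L_K+\lambda I)\big)^2 L_K$.

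I expect this trace estimate to be the main obstacle. Writing it spectrally as $\sum_t\eta_t^2\sum_i\sigma_i\prod_{k=t+1}^{T}(1-\eta_k(\sigma_i+\lambda))^2$ and comparing the products to $\exp(-2(\sigma_i+\lambda)\sum_{k>t}\eta_k)$, I would exploit Assumption~\ref{as:eigenvalues} through the effective dimension $\tr(L_K(L_K+\lambda I)^{-1})\le c_\gamma\lambda^{-\gamma}$; the role of $\lambda=T^{\theta-1+\epsilon}$ is to enforce a uniform contraction floor $1-\eta_k\lambda$ that caps the variance accumulation and removes the $(2\theta-1)_+$ term, leaving $\sum_t\eta_t^2\tr(\cdots)\lesssim T^{\gamma-\theta(\gamma+1)}\ln T$ (the restriction $\gamma\neq1$ keeping the capacity sum polynomial). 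To turn $\mE[\mcE(f_t)]$ into a constant inside this bound I would run an induction: assuming $\mE[\mcE(f_t)]\le C$ for $t\le s$---consistent because both exponents $2\zeta(\theta-1+\epsilon)$ and $\gamma-\theta(\gamma+1)$ are negative under ${\gamma\over\gamma+1}<\theta<1$ and $0<\epsilon\le1-\theta$---the initial and sample bounds combine through~\eref{excesserror} to reproduce the same bound at step $s+1$, closing the induction and giving the second term $T^{\gamma-\theta(\gamma+1)}\ln T$. Adding the two contributions completes the proof.
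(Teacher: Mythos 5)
Your proposal follows essentially the same route as the paper: your mean path $g_t$ satisfies $g_{t+1}-f_\rho=-S_1(t)$, so your bias/variance split coincides with Proposition \ref{pro:errordecomposition}; your operator inequality $\mE[\chi_t\otimes\chi_t\mid \mcF_{t-1}]\preceq\kappa^2\mcE(f_t)L_K$ is an equivalent packaging of Proposition \ref{pro:serrors_interm} together with Lemma \ref{lemma:serrors_general}; the spectral trace estimate with the contraction floor induced by $\lambda=T^{\theta-1+\epsilon}$ is Proposition \ref{pro:TraceSumEst} combined with \eref{exppoly}; and the induction keeping $\mE[\mcE(f_t)]$ bounded is Proposition \ref{pro:totalerror_inter} (whose smallness condition \eref{restriction} is what \eref{eta1restriction} is verifying, a point you should state rather than infer from negativity of the exponents). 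One small correction: the $(2\theta-1)_+$ exponent in Theorem \ref{thm:general_theta} does not arise from a crude bound $\|\Pi_{k+1}^t(L_{K,\lambda})M_k\|_\rho^2\le\|\Pi_{k+1}^t(L_{K,\lambda})\|^2\|M_k\|_\rho^2$ --- the paper uses the very same trace bound there; it survives only because with $\lambda=0$ the exponential factor in \eref{mcF} equals one, and it is precisely the factor $\exp\{-\eta_1 T^{\epsilon}\}$ that removes it in the regularized case, as you correctly exploit.
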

Balancing the two terms from the error bounds in the above theorem to optimize the bounds, we can get the following results.
\begin{corollary}
\label{cor:lambda}
  Under Assumptions \ref{as:boundedness}, \ref{as:regularity} and \ref{as:eigenvalues}, let $\zeta \leq 1$, $\gamma \neq 1.$
For all $t\in [T],$  let $\eta_t = {\zeta \over 3\kappa^2 (3 c_{\gamma}+ \kappa^2)(1+\zeta)} t^{-{2\zeta+\gamma\over 2\zeta+\gamma+1}}$ and $\lambda = T^{-{1 \over 2\zeta+\gamma+1} + {\epsilon \over 2\zeta}},$ with $0<\epsilon \leq  {2\zeta \over 2\zeta+ \gamma+1} $. Then
\be
\mE [\|f_{T+1} - f_{\rho}\|_{\rho}^2] \leq O( T^{-{2\zeta \over 2\zeta+\gamma+1} + \epsilon } \ln T). \ee
\end{corollary}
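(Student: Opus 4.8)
The plan is to read this off Theorem~\ref{thm:general_lambda} by choosing the decay exponent $\theta$ so that the two exponents in that theorem's bound coincide, and then checking that every hypothesis of the theorem holds for the stated parameters. Writing $\theta = \frac{2\zeta+\gamma}{2\zeta+\gamma+1}$, I would first record the two identities that drive the whole computation: $\theta - 1 = -\frac{1}{2\zeta+\gamma+1}$, and, after clearing denominators, $\gamma - \theta(\gamma+1) = -\frac{2\zeta}{2\zeta+\gamma+1}$. The choice of $\theta$ is made precisely so that the sample-error exponent $\gamma - \theta(\gamma+1)$ matches (up to the penalty perturbation) the initial-error exponent $2\zeta(\theta-1)$.

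Next I would verify admissibility. Since $\zeta>0$ we have $\theta<1$, and cross-multiplying $\frac{2\zeta+\gamma}{2\zeta+\gamma+1} > \frac{\gamma}{\gamma+1}$ reduces to $2\zeta>0$, so $\theta \in (\frac{\gamma}{\gamma+1},1)$ as Theorem~\ref{thm:general_lambda} requires. The theorem is applied with its penalty written as $\lambda = T^{\theta-1+\epsilon_{\mathrm{thm}}}$; comparing with the corollary's $\lambda = T^{-\frac{1}{2\zeta+\gamma+1}+\frac{\epsilon}{2\zeta}}$ and using $\theta-1 = -\frac{1}{2\zeta+\gamma+1}$ forces the identification $\epsilon_{\mathrm{thm}} = \frac{\epsilon}{2\zeta}$. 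Because $1-\theta = \frac{1}{2\zeta+\gamma+1}$, the theorem's constraint $0<\epsilon_{\mathrm{thm}}\le 1-\theta$ becomes exactly $0<\epsilon\le \frac{2\zeta}{2\zeta+\gamma+1}$, which is the range in the corollary. Finally, I would confirm that $\eta_1 = \frac{\zeta}{3\kappa^2(3c_\gamma+\kappa^2)(1+\zeta)}$ satisfies \eref{eta1restriction}; here I would use $2^{2\theta}\le 4$ and the fact that $c_{\theta,\gamma}$ in \eref{cthetagamma} is bounded independently of $T$, since for this $\theta$ the exponent $\gamma-\theta(\gamma+1)+(2\theta-1)_+$ works out to $\frac{\gamma-1}{2\zeta+\gamma+1}$ when $2\zeta+\gamma\ge 1$ and to $-\frac{2\zeta}{2\zeta+\gamma+1}$ when $2\zeta+\gamma<1$, both negative (using $\gamma<1$), so $t^{(\cdot)}\ln(2t)$ is uniformly bounded over $t$.

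With admissibility in hand, the conclusion is pure substitution. The first exponent becomes $2\zeta(\theta-1+\epsilon_{\mathrm{thm}}) = -\frac{2\zeta}{2\zeta+\gamma+1}+\epsilon$ and the second is $\gamma-\theta(\gamma+1) = -\frac{2\zeta}{2\zeta+\gamma+1}$, so Theorem~\ref{thm:general_lambda} yields $O\bigl(T^{-\frac{2\zeta}{2\zeta+\gamma+1}+\epsilon} + T^{-\frac{2\zeta}{2\zeta+\gamma+1}}\ln T\bigr)$. Since $\epsilon>0$ and $\ln T\ge 1$ for $T$ large, each summand is at most $T^{-\frac{2\zeta}{2\zeta+\gamma+1}+\epsilon}\ln T$, which is the asserted bound.

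There is no genuine obstacle here beyond Theorem~\ref{thm:general_lambda} itself; the only things requiring care are the rescaling $\epsilon_{\mathrm{thm}}=\epsilon/(2\zeta)$ relating the two penalty exponents --- and the verification that this rescaling maps the admissible ranges onto one another --- together with the mild technical point that $c_{\theta,\gamma}$ stays bounded so that the explicit $\eta_1$ respects \eref{eta1restriction}.
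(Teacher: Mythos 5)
Your overall route is exactly the paper's: apply Theorem \ref{thm:general_lambda} with $\theta={2\zeta+\gamma\over 2\zeta+\gamma+1}$ and the theorem's $\epsilon$ replaced by $\epsilon/(2\zeta)$, check that $\theta\in({\gamma\over\gamma+1},1)$, that the $\epsilon$-ranges match, and substitute the exponents. All of that bookkeeping in your proposal is correct and agrees with the paper. The gap is in the step you dismiss as a ``mild technical point'': verifying \eref{eta1restriction} for the prescribed $\eta_1$. The paper's proof states explicitly that this is the \emph{only} thing that needs to be proved, so it is the entire content of the proof, and your argument for it is a non sequitur. Condition \eref{eta1restriction} is a quantitative inequality, $\eta_1\leq 1/(4\kappa^2(2^{2\theta}c_\gamma+\kappa^2)c_{\theta,\gamma})$, in which $\eta_1$ has already been fixed numerically by the corollary's statement; knowing only that $c_{\theta,\gamma}$ is bounded by \emph{some} constant independent of $T$ (true, since the exponent in \eref{cthetagamma} is negative) cannot establish it --- you must bound that constant and compare it against the given $\eta_1$.

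Moreover the comparison genuinely fails in part of the admissible range, so the step cannot be patched by waving at constants. When $2\zeta+\gamma\geq 1$ (so $\theta\geq 1/2$ and the $(2\theta-1)_+$ term is active), the exponent is ${\gamma-1\over 2\zeta+\gamma+1}$, as you computed; setting $a={1-\gamma\over 2\zeta+\gamma+1}$, the bound coming from \eref{exppoly} is $c_{\theta,\gamma}\leq 2^a/(ea)$, which is of order $(1-\gamma)^{-1}$ and blows up as $\gamma\to 1$, while the corollary's $\eta_1={\zeta\over 3\kappa^2(3c_\gamma+\kappa^2)(1+\zeta)}$ does not shrink with $1-\gamma$. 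Concretely, for $\zeta=1$, $\gamma=0.9$, $\kappa^2=c_\gamma=1$ and $T$ large one gets $c_{\theta,\gamma}\approx 14.6$, so \eref{eta1restriction} demands $\eta_1\leq 1/(4(2^{2\theta}+1)c_{\theta,\gamma})\approx 1/222$, whereas the corollary sets $\eta_1=1/24$. The paper, by contrast, performs a quantitative verification: it reduces \eref{eta1restriction} (via the argument used for \eref{eq8}, i.e.\ \eref{exppoly}) to the explicit sufficient condition $\eta_1\leq{\theta(\gamma+1)-\gamma\over 3\kappa^2(2^{2\theta}c_\gamma+\kappa^2)}$ and then checks it using $\theta(\gamma+1)-\gamma={2\zeta\over 2\zeta+\gamma+1}\geq{\zeta\over\zeta+1}$ and $2^{2\theta}\leq 2^{3/2}\leq 3$. (Even that reduction tacitly drops the $(2\theta-1)_+$ term from \eref{cthetagamma}, which is literally justified only when $\theta\leq 1/2$, i.e.\ $2\zeta+\gamma\leq 1$; so the point you skipped is precisely where the real difficulty of this corollary lies, and any complete proof must confront it rather than appeal to uniform boundedness.)
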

The above corollary asserts that for some appropriate choices on the regularized parameter $\lambda$ and the decaying exponent $\theta$ of the step-size, doubly stochastic learning algorithm with regularization achieves generalization error bounds of order $O( T^{-{2\zeta \over 2\zeta+\gamma+1} + \epsilon } \ln T),$ where $\epsilon$ can be arbitrarily close to zero.
The convergence rate from Corollary \ref{cor:lambda} is essentially the same as that from Corollary \ref{cor:eta} for $\zeta \leq 1$. For the case $\zeta\geq 1$, the best obtainable rate from Corollary \ref{cor:lambda} for the studied algorithm is of order $O(T^{-{2 \over 3+\gamma} } \ln T)$. This type of phenomenon is called as saturation effect in learning theory. Note that kernel ridge regression also saturates when $\zeta>1$.


\paragraph{Discussions}  
We compare our results with those in \cite{dai2014scalable}. A regularized version of doubly stochastic learning algorithms with a convex loss function was studied in \cite{dai2014scalable}. When the loss function is the square loss, the algorithm in \cite{dai2014scalable} is exactly Algorithm \eref{Alg}. \cite[Theorem 6]{dai2014scalable} asserts that with high probability, the learning sequence generated by \eref{Alg} with $\lambda>0$ and
$\eta_t\simeq {1\over \lambda t}$,  satisfies
\be\label{eq:songlei1}
\mcE(f_{T+1}) - \mcE(f_{\lambda}) \lesssim \lambda^{-2} T^{-{1\over 2}}\ln T , \ee
provided that $\|f_t\|_{\infty}\lesssim 1$.
Here $f_{\lambda}$ is the solution of the regularized risk minimization
\bea
\min_{f \in \mcHK} \mcE(f) + \lambda\|f\|_{K}^2.
\eea
Combining \eref{eq:songlei1} with the fact that \cite{smola2000sparse} under Assumption \ref{as:regularity} with $\zeta\leq 1$,
$$\mcE(f_{\lambda}) - \mcE(f_{\rho}) \lesssim \lambda^{2\zeta}, $$
one has
\bea
\mcE(f_{T+1}) - \mcE(f_{\rho}) \lesssim \lambda^{-2} T^{-{1\over 2}}\ln T  + \lambda^{2\zeta}.
\eea
The optimal obtainable error bound is achieved by setting $\lambda_*\simeq T^{-{1\over 4\zeta+4}}$, in which case,
\bea
\mcE(f_{T+1}) - \mcE(f_{\rho}) \lesssim T^{-{\zeta\over 2\zeta+2}}\ln T.
\eea
Comparing the above result with Corollaries \ref{cor:eta1} and \ref{cor:lambda}, the error bounds (of order $O(T^{-{\zeta \over \zeta+1}})$ in the capacity independent case) from Corollaries \ref{cor:eta1} and \ref{cor:lambda} are better, while they do not require the bounded assumption $\|f_t\|_{\infty}\lesssim 1.$

We discuss some issues that might be considered in the future. First,
our generalization error bounds are in expectation, and it would be interesting to derive high-probability error bounds in the future. Second, the rates in our results are not optimal and they should be further improved in the future by using a more involved technique (perhaps with a better estimate on the sample variance). Finally, in this paper, we only consider simple stochastic gradient methods (SGM) with last iterates. It would be interesting to extend our analysis to different variants of SGM, such as the fully online/stochastic learning \cite{ye2007fully,tarres2014online}, SGM with mini-batches \cite{cotter2011better},
 the stochastic average gradient \cite{schmidt2013minimizing}, averaging SGM  \cite{dieuleveut2016harder}, multi-pass SGM \cite{lin2016optimal}, and stochastic pairwise learning \cite{ying2016stochastic}  in the future. 

\section{Error Decomposition}\label{sec:error}
The rest of this paper is devoted to proving our main results. To this end, we need some preliminary analysis and a key error decomposition.

For notational simplicity, we denote $L_K + \lambda I$ by $L_{K,\lambda}$ for any $\lambda \geq 0$, and set the residual vector
\bea
r_t = f_t - f_{\rho}, \qquad \forall t \in \mN.
\eea
Since $\{f_{t}\}_t$ is generated by (\ref{Alg}), subtracting $f_{\rho}$ from both sides of \eref{Alg},
by direct computations, one can easily prove that
\be\label{decomposition}
r_{t+1} = (I - \eta_t L_{K,\lambda}) r_t + \eta_t M_t  - \eta_t \lambda f_{\rho},
\ee
where we denote
\be\label{Mt}
M_t = L_K(f_t - f_{\rho}) - (f_t(x_t) - y_t) \phi_{v_t}(x_t) \phi_{v_t}.
\ee
 Using the iterated relationship \eref{decomposition} multiple times, we can prove the following error decomposition.
\begin{pro}\label{pro:errordecomposition}
For any $t \in [T]$, we have the following error decomposition
\be\label{errordecompostion}
\mE_{\{z_j,v_j\}_{j=1}^t} [\|r_{t+1}\|_{\rho}^2] = \|S_1(t)\|_{\rho}^2 + \mE_{\{z_j,v_j\}_{j=1}^t} [\|S_2(t)\|_{\rho}^2],
\ee
where
\be\label{S1t}
S_1(t) = \Pi_1^t(L_{K,\lambda}) f_{\rho} + \lambda \sum_{k=1}^t \eta_k \Pi_{k+1}^t(L_{K,\lambda}) f_{\rho},
\ee
and
\be\label{S2t}
S_2(t) = \sum_{k=1}^t \eta_k \Pi_{k+1}^t(L_{K,\lambda}) M_k.
\ee
\end{pro}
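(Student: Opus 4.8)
The plan is to start from the one-step recursion \eref{decomposition} and unroll it down to the initial value $f_1 = 0$, so that $r_1 = -f_{\rho}$. Iterating $r_{t+1} = (I - \eta_t L_{K,\lambda}) r_t + \eta_t M_t - \eta_t \lambda f_{\rho}$ and using the product notation $\Pi_{k+1}^t(L_{K,\lambda}) = \prod_{j=k+1}^t (I - \eta_j L_{K,\lambda})$ introduced in the notation paragraph, the homogeneous part contributes $\Pi_1^t(L_{K,\lambda}) r_1 = -\Pi_1^t(L_{K,\lambda}) f_{\rho}$, while the two forcing terms $\eta_k M_k$ and $-\eta_k \lambda f_{\rho}$ each get propagated by $\Pi_{k+1}^t(L_{K,\lambda})$. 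Collecting terms, I would obtain $r_{t+1} = -S_1(t) + S_2(t)$, where $S_1(t)$ gathers the deterministic parts (the propagated initial value plus the $\lambda f_{\rho}$ drift) and $S_2(t)$ gathers the stochastic part $\sum_k \eta_k \Pi_{k+1}^t(L_{K,\lambda}) M_k$, exactly matching \eref{S1t} and \eref{S2t}. This algebraic unrolling is routine; I would establish it by induction on $t$, checking the base case $t=1$ against \eref{decomposition} and the inductive step by applying $(I - \eta_t L_{K,\lambda})$ to the claimed formula for $r_t$ and adding the new forcing terms.

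The substantive content is the orthogonality that turns the identity $\mE\|r_{t+1}\|_\rho^2 = \mE\|{-}S_1(t) + S_2(t)\|_\rho^2$ into the claimed Pythagorean split \eref{errordecompostion} with no cross term. The key observation is that $M_k$ is, conditionally on the past, an unbiased estimate of zero: writing $\mE_{z_k, v_k}$ for the expectation over the fresh sample $(z_k, v_k)$ given $\{z_j, v_j\}_{j=1}^{k-1}$, I would verify from \eref{Mt} that
\be
\mE_{z_k,v_k}\big[(f_k(x_k) - y_k)\phi_{v_k}(x_k)\phi_{v_k}\big] = L_K(f_k - f_\rho),
\ee
so that $\mE_{z_k,v_k}[M_k] = 0$. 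This uses the defining integral representation \eref{kernel_integration} of the kernel through the features $\phi_v$, together with the fact that $f_k$ and $f_\rho$ are measurable with respect to the past. The point requiring care is that $\mE_{z_k,v_k}[(f_k(x_k)-f_\rho(x_k))\phi_{v_k}(x_k)\phi_{v_k}]$ reproduces $L_K(f_k - f_\rho)$ via $\int_V \phi_v(x)\phi_v\, d\mu(v) = K_x$ and then $\int_X (f_k(x)-f_\rho(x)) K_x\, d\rho_X(x) = L_K(f_k - f_\rho)$, while the $-y_k$ contribution averages against $f_\rho(x_k)$ by the definition \eref{regressionfunc} of the regression function.

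Granting $\mE_{z_k,v_k}[M_k] = 0$, the cross term vanishes by a standard martingale/tower-rule argument. Expanding $\mE\|{-}S_1(t)+S_2(t)\|_\rho^2 = \|S_1(t)\|_\rho^2 + \mE\|S_2(t)\|_\rho^2 - 2\mE\la S_1(t), S_2(t)\ra_\rho$, I would note $S_1(t)$ is deterministic, so it suffices to show $\mE\la S_1(t), S_2(t)\ra_\rho = 0$, i.e. $\mE[S_2(t)] = 0$. For each summand $\eta_k \Pi_{k+1}^t(L_{K,\lambda}) M_k$ in \eref{S2t}, the operator $\Pi_{k+1}^t(L_{K,\lambda})$ depends only on $\eta_{k+1}, \ldots, \eta_t$ (deterministic step-sizes) and is independent of the randomness in $M_k$; more importantly, conditioning on $\{z_j,v_j\}_{j=1}^{k-1}$ and applying the tower rule, the inner expectation $\mE_{z_k,v_k}[M_k] = 0$ kills each term, giving $\mE[S_2(t)] = 0$ and hence the vanishing cross term. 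The main obstacle here, and the only genuinely delicate point, is the measurability bookkeeping: $f_k$ depends on $\{z_j, v_j\}_{j=1}^{k-1}$ but not on $(z_k, v_k)$, so one must condition in the correct order to apply unbiasedness of $M_k$ to each propagated summand, and the operator $\Pi_{k+1}^t$ itself carries no randomness and so passes through the expectation harmlessly. Once this is set up carefully, \eref{errordecompostion} follows immediately.
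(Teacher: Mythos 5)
Your proposal is correct and follows essentially the same route as the paper's own proof: unroll the recursion \eref{decomposition} to get $r_{t+1} = -S_1(t) + S_2(t)$, establish the conditional unbiasedness $\mE_{z_k,v_k}[M_k]=0$ via \eref{kernel_integration}, \eref{regressionfunc} and \eref{integraloper}, and then kill the cross term $\mE\la S_1(t),S_2(t)\ra_{\rho}$ by the tower rule using the determinism of $S_1(t)$ and the operators $\Pi_{k+1}^t(L_{K,\lambda})$. The measurability bookkeeping you flag is exactly what the paper handles by noting that $f_k$ depends only on $\{z_j,v_j\}_{j=1}^{k-1}$, so there is no substantive difference between the two arguments.
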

\begin{proof}
  Using \eref{decomposition} iteratively, with $f_1=0$ and $r_1 = f_1 - f_{\rho}$, we get
\bea
r_{t+1} = -  \left(\Pi_1^t(L_{K,\lambda}) f_{\rho} + \lambda \sum_{k=1}^t \eta_k \Pi_{k+1}^t(L_{K,\lambda}) f_{\rho}\right) + \sum_{k=1}^t \eta_k \Pi_{k+1}^t(L_{K,\lambda}) M_k,
\eea
which is exactly
\be\label{residualdecomposition}
r_{t+1} = - S_1(t) + S_2(t).
\ee
In the rest of the proof, we will write $S_i(t)$ as $S_i$ $(i=1,2)$ for short, and use the notation $\mE$ for $\mE_{\{z_j,v_j\}_{k=1}^t}.$
Following from \eref{residualdecomposition}, we get
\bea
\mE[\|r_{t+1}\|_\rho^2] = \mE[ \|-S_1 + S_2\|_{\rho}^2] = \|S_1\|_{\rho}^2 + \mE[ \| S_2\|_{\rho}^2] - 2 \mE [\la S_1, S_2\ra_{\rho}].
\eea
From \eref{Alg},  we know that for any $k\in [T]$, $f_{k+1}$ is depending only on $z_1,z_2,\cdots,z_k$ and $v_1,v_2,\cdots,v_k.$
Also, note that the family  $\{z_k,v_k\}_{k=1}^t$ is independent. Thus, we can prove that $M_k$ has the following vanishing property:
\begin{align}
\mE_{z_k,v_k} [M_k]  \overset{\eref{Mt}}{=} &  L_K(f_k - f_{\rho}) - \mE_{x_k}[(f_k(x_k) - \mE_{y_k| x_k} [y_k]) \mE_{v_k} [\phi_{v_k}(x_k) \phi_{v_k}] ] \nonumber \\
                 \overset{\eref{kernel_integration}}{=} & L_K(f_k - f_{\rho}) - \mE_{x_k}[(f_k(x_k) - \mE_{y_k|x_k} [y_k]) K_{x_k}] \nonumber \\
                 \overset{\eref{regressionfunc}}{=} & L_K(f_k - f_{\rho}) - \mE_{x_k}[(f_k(x_k) - f_{\rho}(x_k)) K_{x_k}] \nonumber \\
                 \overset{\eref{integraloper}}{=} & 0. \label{Mt=0}
\end{align}
Therefore,
\bea
\mE[\la S_1, S_2 \ra] = \sum_{k=1}^t \eta_k \la S_1,  \Pi_{k+1}^t (L_{K,\lambda})\mE[ M_k] \ra_{\rho} = 0.
\eea
The proof is complete.
\end{proof}
The error decomposition \eref{errordecompostion} is fairly common in analyzing standard stochastic/online learning algorithm \cite{ying2008online}.
The term $\|S_1(t)\|_{\rho}^2$ is related to an initial error, which is deterministic and will be estimated in the next section. The term $\mE_{\{z_j,v_j\}_{j=1}^t} [\|S_2(t)\|_{\rho}^2]$ is a sample error depending on the sample, which will be estimated in Section \ref{sec:sample}.

\section{Estimating Initial Error}\label{sec:initial}
In this section, we will upper bound the initial error, namely, the first term of the right-hand side of \eref{errordecompostion}. To this end, we introduce the following two lemmas.

\begin{lemma}\label{lemma:initialerror2}
Let $\lambda \geq 0$, $\zeta \geq 0,$ and $\eta_k$ be such that $ 0\leq \eta_k (\lambda + \kappa^2) \leq 1$ for all $k \in \mN$. Then for all $t \in \mN,$
  \be\label{initialerror2}
  \lambda \left\| \sum_{k=1}^t   \eta_k  \Pi_{k+1}^t(L_{K,\lambda}) L_K^{\zeta}\right\|  \leq \lambda^{\min(\zeta,1)} \kappa^{2(\zeta-1)_+} \mathbf{1}_{\{\lambda > 0\}}.
  \ee
\end{lemma}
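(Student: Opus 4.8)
The plan is to exploit the fact that every operator inside the norm is a function of the single self-adjoint, positive operator $L_K$, and so to reduce the operator-norm estimate to a scalar supremum over the spectrum of $L_K$ via the spectral theorem. Since $L_{K,\lambda} = L_K + \lambda I$ commutes with $L_K$, both $\Pi_{k+1}^t(L_{K,\lambda}) = \prod_{j=k+1}^t(I - \eta_j L_{K,\lambda})$ and $L_K^{\zeta}$ are Borel functions of $L_K$. Under Assumption \ref{as:boundedness} the eigenvalues $\sigma$ of $L_K$ lie in $[0,\kappa^2]$, because $\|L_K\| \leq \tr(L_K) \leq \kappa^2$. Hence the quantity to bound is at most
\[
\sup_{\sigma \in [0,\kappa^2]} \lambda \sigma^{\zeta} \sum_{k=1}^t \eta_k \prod_{j=k+1}^t \bigl(1 - \eta_j(\sigma+\lambda)\bigr).
\]

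\textbf{Telescoping step.} First I would simplify the inner sum by a telescoping identity. Writing $a = \sigma + \lambda$ and $Q_k = \prod_{j=k+1}^t(1 - \eta_j a)$, so that $Q_t = 1$ and $Q_0 = \prod_{j=1}^t(1-\eta_j a)$, the recursion $Q_{k-1} = (1-\eta_k a)Q_k$ gives $Q_k - Q_{k-1} = \eta_k a\, Q_k$; summing over $k \in [t]$ telescopes to
\[
\sum_{k=1}^t \eta_k a \prod_{j=k+1}^t(1 - \eta_j a) = 1 - \prod_{j=1}^t(1 - \eta_j a).
\]
Dividing by $a = \sigma + \lambda$ turns the scalar expression into $\lambda \sigma^{\zeta}\, \frac{1 - \prod_{j=1}^t(1-\eta_j(\sigma+\lambda))}{\sigma + \lambda}$. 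The hypothesis $\eta_j(\lambda + \kappa^2) \leq 1$ together with $\sigma \leq \kappa^2$ ensures each factor $1 - \eta_j(\sigma+\lambda) \in [0,1]$, so the product lies in $[0,1]$ and $0 \leq 1 - \prod_{j=1}^t(1-\eta_j(\sigma+\lambda)) \leq 1$. Thus the expression is dominated by $\frac{\lambda \sigma^{\zeta}}{\sigma + \lambda}$.

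\textbf{Scalar estimate.} It then remains to bound $\sup_{\sigma \in [0,\kappa^2]} \frac{\lambda \sigma^{\zeta}}{\sigma + \lambda}$, which I would split according to $\zeta$. For $\zeta \leq 1$, Young's inequality $\sigma^{\zeta} \lambda^{1-\zeta} \leq \zeta\sigma + (1-\zeta)\lambda \leq \sigma + \lambda$ yields $\frac{\lambda\sigma^{\zeta}}{\sigma+\lambda} = \lambda^{\zeta}\, \frac{\sigma^{\zeta} \lambda^{1-\zeta}}{\sigma+\lambda} \leq \lambda^{\zeta}$, which matches $\lambda^{\min(\zeta,1)}\kappa^{2(\zeta-1)_+}$ since $(\zeta-1)_+ = 0$. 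For $\zeta > 1$, using $\sigma^{\zeta} = \sigma\cdot\sigma^{\zeta-1} \leq \sigma\,\kappa^{2(\zeta-1)}$ and $\frac{\sigma}{\sigma+\lambda} \leq 1$ gives $\frac{\lambda\sigma^{\zeta}}{\sigma+\lambda} \leq \kappa^{2(\zeta-1)}\frac{\lambda\sigma}{\sigma+\lambda} \leq \lambda\,\kappa^{2(\zeta-1)}$, again matching the claimed bound. When $\lambda = 0$ the leading factor $\lambda$ annihilates everything, which is exactly the content of the indicator $\mathbf{1}_{\{\lambda>0\}}$.

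\textbf{Main obstacle.} The argument is essentially routine once the telescoping identity is in place; the only points requiring care are justifying the spectral reduction (legitimate because $L_K$ is self-adjoint and positive, so the functional calculus applies and the operator norm of a nonnegative function of $L_K$ equals the supremum of the multiplier over the spectrum, contained in $[0,\kappa^2]$), and observing that the step-size condition is precisely what keeps every factor $1 - \eta_j(\sigma+\lambda)$ nonnegative, so that the crude bound $1 - \prod(\cdots) \leq 1$ holds. The split at $\zeta = 1$ is what produces the two regimes captured by $\lambda^{\min(\zeta,1)}$ and the $\kappa^{2(\zeta-1)_+}$ factor.
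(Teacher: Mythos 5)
Your proposal is correct and follows essentially the same route as the paper's proof: a spectral reduction to a scalar supremum over $\sigma \in [0,\kappa^2]$, the identical telescoping identity $\sum_{k=1}^t \eta_k (\sigma+\lambda) \prod_{j=k+1}^t(1-\eta_j(\sigma+\lambda)) = 1 - \prod_{j=1}^t(1-\eta_j(\sigma+\lambda)) \leq 1$, reduction to bounding $\lambda\sigma^{\zeta}/(\sigma+\lambda)$, and the same case split at $\zeta=1$. The only cosmetic difference is that for $\zeta \leq 1$ you invoke Young's inequality $\sigma^{\zeta}\lambda^{1-\zeta} \leq \zeta\sigma + (1-\zeta)\lambda$, whereas the paper factors $\lambda\sigma^{\zeta}/(\lambda+\sigma) = \lambda^{\zeta}\left(\lambda/(\lambda+\sigma)\right)^{1-\zeta}\left(\sigma/(\lambda+\sigma)\right)^{\zeta} \leq \lambda^{\zeta}$; both yield the identical bound.
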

\begin{proof}
\eref{initialerror2} holds trivially for the case $\lambda = 0.$  Now, we consider the case $\lambda>0.$
  Recall that $L_K$ is a self-adjoint, compact, and positive operator on $\mcL^2_{\rho}$. According to the spectral theorem, $L_K$ has only non-negative singular values $\{\sigma_i\}_{i=1}^{\infty}$ such that $\kappa^2 \geq \sigma_1 \geq \sigma_2 \geq \cdots \geq 0$.
  Thus,
  \bea
   \lambda \left\| \sum_{k=1}^t \eta_k  \Pi_{k+1}^t(L_{K,\lambda}) L_K^{\zeta}\right\|  = \lambda \sup_{i} \sigma_i^{\zeta} \sum_{k=1}^t  \eta_k \prod_{j=k+1}^t (1 - \eta_j (\lambda + \sigma_i) )  .
  \eea
 Letting $c_i = \lambda + \sigma_i$ for each $i$, we have
  \begin{align*}
    &  (\lambda + \sigma_i) \sum_{k=1}^t  \eta_k \prod_{j=k+1}^t (1 - \eta_j (\lambda + \sigma_i) ) \\
   =&  \sum_{k=1}^t (1 - (1 - \eta_kc_i)) \prod_{j=k+1}^t (1 - \eta_j c_i ) \\
   =&  \sum_{k=1}^t \left(\prod_{j=k+1}^t (1 - \eta_j c_i ) - \prod_{j=k}^t (1 - \eta_j c_i )\right)\\
    =& \left( 1 -  \prod_{j=1}^t (1 - \eta_j c_i ) \right) \\
    \leq& 1.
  \end{align*}
  Therefore, we get
  \bea
  \lambda\left\| \sum_{k=1}^t \eta_k  \Pi_{k+1}^t(L_{K,\lambda}) L_K^{\zeta}\right\|  \leq \sup_{i} { \lambda \sigma_i^{\zeta} \over \lambda + \sigma_i}.
  \eea
  When $\zeta \in [0,1],$  we have
  $${\lambda \sigma_i^{\zeta} \over \lambda +\sigma_i} = \lambda^{\zeta} \left({\lambda \over \lambda + \sigma_i}\right)^{1-\zeta} \left({\sigma_i \over \lambda + \sigma_i}\right)^{\zeta} \leq \lambda^{\zeta}. $$
  When $\zeta >1,$
  \bea
  {\lambda \sigma_i^{\zeta} \over \lambda +\sigma_i} \leq \lambda \sigma_i^{\zeta-1} \leq \lambda \kappa^{2(\zeta-1)}.
  \eea
  From the above analysis, we can get \eref{initialerror2}.
  The proof is complete.
\end{proof}
\begin{lemma}
  \label{lemma:initialerror1}
  Under the assumptions of Lemma \ref{lemma:initialerror2}, we have for $t\in \mN$ and any non-negative integer $k  \leq t - 1,$
  \be\label{initialerror1_interm}
  \| \Pi_{k+1}^t(L_{K,\lambda}) L_K^{\zeta}\| \leq \exp\left\{ - \lambda  \sum_{j=k+1}^t \eta_j\right\} \left( \zeta \over e \sum_{j=k+1}^t \eta_j \right)^{\zeta}.
  \ee
\end{lemma}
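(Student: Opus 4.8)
The plan is to reduce the operator-norm estimate to a scalar supremum over the spectrum of $L_K$, exactly as in the proof of Lemma~\ref{lemma:initialerror2}, and then to maximize a one-variable function by elementary calculus. Since $L_K$ is self-adjoint, compact and positive on $\mcL^2_{\rho}$ with eigenvalues $\{\sigma_i\}$ obeying $0\leq\sigma_i\leq\kappa^2$, the operator $\Pi_{k+1}^t(L_{K,\lambda})L_K^{\zeta}=\prod_{j=k+1}^t\bigl(I-\eta_j(L_K+\lambda I)\bigr)L_K^{\zeta}$ is a function of $L_K$, so by the spectral theorem
\[
\bigl\| \Pi_{k+1}^t(L_{K,\lambda}) L_K^{\zeta}\bigr\| = \sup_i \sigma_i^{\zeta} \prod_{j=k+1}^t \bigl(1 - \eta_j(\lambda + \sigma_i)\bigr).
\]
The hypothesis $0\leq\eta_j(\lambda+\kappa^2)\leq1$ together with $\sigma_i\leq\kappa^2$ guarantees that each factor $1-\eta_j(\lambda+\sigma_i)$ lies in $[0,1]$; this nonnegativity is exactly what makes the subsequent factorwise exponential bound usable.

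Next I would apply the elementary inequality $1-x\leq e^{-x}$ factorwise and collect the exponents. Writing $S=\sum_{j=k+1}^t\eta_j$, which is strictly positive because $k\leq t-1$ forces at least one positive summand, the product bound reads
\[
\prod_{j=k+1}^t\bigl(1-\eta_j(\lambda+\sigma_i)\bigr)\leq \exp\Bigl\{-(\lambda+\sigma_i)S\Bigr\}=e^{-\lambda S}\,e^{-\sigma_i S},
\]
so that
\[
\bigl\| \Pi_{k+1}^t(L_{K,\lambda}) L_K^{\zeta}\bigr\|\leq e^{-\lambda S}\,\sup_{\sigma\geq0}\sigma^{\zeta}e^{-\sigma S}.
\]
The $e^{-\lambda S}$ factor is already the first piece of \eref{initialerror1_interm}, so only the scalar supremum remains.

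It then remains to bound $g(\sigma):=\sigma^{\zeta}e^{-\sigma S}$ over $\sigma\geq0$. Differentiating gives $g'(\sigma)=\sigma^{\zeta-1}e^{-\sigma S}(\zeta-S\sigma)$, so $g$ attains its maximum at $\sigma_\star=\zeta/S$, and substituting back yields $g(\sigma_\star)=(\zeta/S)^{\zeta}e^{-\zeta}=\bigl(\zeta/(eS)\bigr)^{\zeta}$, which is precisely the claimed factor; the degenerate case $\zeta=0$ is consistent with the conventions $0^0=1$ and $\sup_{\sigma\geq0}e^{-\sigma S}=1$. Combining this with the previous display establishes \eref{initialerror1_interm}. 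I do not expect any genuine obstacle: the only points needing care are checking that the factors lie in $[0,1]$ so the factorwise inequality multiplies in the correct direction, ensuring $S>0$ so the maximizer $\zeta/S$ is finite, and covering the $\zeta=0$ endpoint. The maximization of $\sigma^{\zeta}e^{-\sigma S}$ is the conceptual heart of the estimate, but it is routine one-variable calculus.
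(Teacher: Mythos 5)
Your proof is correct and follows essentially the same route as the paper's: reduce to a scalar supremum via the spectral theorem, apply $1-x\leq e^{-x}$ factorwise (using $\eta_j(\lambda+\sigma_i)\leq 1$ to keep the factors nonnegative), and maximize $\sigma^{\zeta}e^{-\sigma S}$ at $\sigma_\star=\zeta/S$. Your explicit remarks on the nonnegativity of the factors, the positivity of $S$, and the $\zeta=0$ endpoint are minor clarifications of points the paper leaves implicit.
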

The above lemma is essentially proved in \cite{ying2008online,tarres2014online}. For completeness, we provide a proof in the appendix.

Now, we can upper bound the initial error as follows.
\begin{pro}\label{pro:initialerror}
Under Assumption \ref{as:regularity}, let $\eta_t = \eta_1 t^{-\theta}$ for all $t \in \mN$,  with $\eta_1>0$ such that $\eta_1 (\lambda + \kappa^2) \leq 1$ and $\theta \in [0,1].$ Then, for any $t \in \mN,$
\be\label{initialerror}
\|S_1(t)\|_{\rho} \leq R \kappa^{2(\zeta-1)_+} \lambda^{\min(\zeta,1)} +
\left( \zeta \over \eta_1  \right)^{\zeta} R \cdot \left\{
\begin{array}
    {ll}
    \exp\left\{ - \lambda  \eta_1 t^{1-\theta}/2 \right\}   t^{\zeta (\theta - 1)}, & \hbox{when} \ \theta \neq 1, \\
   t^{- \eta_1 \lambda} \left\{ \ln (t+1) \right\}^{-\zeta}, & \hbox{when} \ \theta = 1,
  \end{array}
\right.
\ee
and
\be\label{initialerrormax}
\|S_1(t)\|_{\rho} \leq 2 \|f_{\rho}\|_{\rho}.
\ee
\end{pro}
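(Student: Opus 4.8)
The plan is to exploit the regularity Assumption~\ref{as:regularity} to write $f_{\rho} = L_K^{\zeta} g$ for some $g \in \mcL^2_{\rho}$ with $\|g\|_{\rho} \leq R$, and then treat the two summands of $S_1(t)$ in \eref{S1t} separately. Writing $S_1(t) = \Pi_1^t(L_{K,\lambda}) f_{\rho} + \lambda \sum_{k=1}^t \eta_k \Pi_{k+1}^t(L_{K,\lambda}) f_{\rho}$, the triangle inequality together with sub-multiplicativity of the operator norm reduces the problem to bounding $\|\Pi_1^t(L_{K,\lambda}) L_K^{\zeta}\|$ and $\lambda \|\sum_{k=1}^t \eta_k \Pi_{k+1}^t(L_{K,\lambda}) L_K^{\zeta}\|$, each multiplied by $\|g\|_{\rho} \leq R$. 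Note that the hypothesis $\eta_t = \eta_1 t^{-\theta}$ with $\eta_1(\lambda+\kappa^2)\leq 1$ guarantees $0 \leq \eta_k(\lambda+\kappa^2)\leq 1$ for all $k$, so Lemmas~\ref{lemma:initialerror2} and \ref{lemma:initialerror1} apply.

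The second (regularization) summand is immediate: Lemma~\ref{lemma:initialerror2} gives $\lambda \|\sum_{k=1}^t \eta_k \Pi_{k+1}^t(L_{K,\lambda}) L_K^{\zeta}\| \leq \kappa^{2(\zeta-1)_+} \lambda^{\min(\zeta,1)}$, which after multiplying by $R$ is exactly the first term on the right-hand side of \eref{initialerror}. For the first summand I would apply Lemma~\ref{lemma:initialerror1} with $k=0$, yielding $\|\Pi_1^t(L_{K,\lambda}) L_K^{\zeta}\| \leq \exp\{-\lambda \sum_{j=1}^t \eta_j\} (\zeta/(e\sum_{j=1}^t \eta_j))^{\zeta}$, and then lower-bound the partial sum $\sum_{j=1}^t \eta_j = \eta_1 \sum_{j=1}^t j^{-\theta}$. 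Since a larger sum makes both the exponential factor and the polynomial factor smaller, only a lower bound on the sum is needed.

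The summation estimate splits into the two regimes of the statement. For $\theta \neq 1$ (so $\theta \in [0,1)$) each term satisfies $j^{-\theta} \geq t^{-\theta}$ for $j \leq t$, hence $\sum_{j=1}^t j^{-\theta} \geq t^{1-\theta}$; substituting $\sum_{j=1}^t \eta_j \geq \eta_1 t^{1-\theta}$ into the exponential gives $\exp\{-\lambda \eta_1 t^{1-\theta}\} \leq \exp\{-\lambda\eta_1 t^{1-\theta}/2\}$, and into the polynomial factor, after discarding the harmless factor $e \geq 1$, gives $(\zeta/(\eta_1 t^{1-\theta}))^{\zeta} = (\zeta/\eta_1)^{\zeta} t^{\zeta(\theta-1)}$. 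For $\theta = 1$ one uses the harmonic-sum bound $\sum_{j=1}^t j^{-1} \geq \ln(t+1)$, turning the exponential into $(t+1)^{-\eta_1\lambda} \leq t^{-\eta_1\lambda}$ and the polynomial factor into $(\zeta/\eta_1)^{\zeta} (\ln(t+1))^{-\zeta}$. Multiplying by $R$ and adding the regularization term yields \eref{initialerror}.

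Finally, \eref{initialerrormax} follows from the same decomposition but using only crude norm bounds: by the spectral theorem and $0 \leq \eta_j(\lambda+\sigma_i) \leq \eta_j(\lambda+\kappa^2) \leq 1$ one has $\|\Pi_1^t(L_{K,\lambda})\| \leq 1$, so the first summand is at most $\|f_{\rho}\|_{\rho}$; and applying Lemma~\ref{lemma:initialerror2} with $\zeta = 0$ gives $\lambda\|\sum_{k=1}^t \eta_k \Pi_{k+1}^t(L_{K,\lambda})\| \leq 1$, so the second summand is also at most $\|f_{\rho}\|_{\rho}$, and the two combine to $2\|f_{\rho}\|_{\rho}$. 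I expect no serious obstacle here; the only delicate points are the bookkeeping of the summation lower bounds and checking that the constants (the factor $1/2$ in the exponent and the dropped $e$) align across the $\theta \neq 1$ and $\theta = 1$ cases, since the structural work is entirely delegated to the two preceding lemmas.
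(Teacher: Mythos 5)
Your proposal is correct and follows essentially the same route as the paper: the same decomposition of $S_1(t)$, Lemma~\ref{lemma:initialerror2} for the regularization term, Lemma~\ref{lemma:initialerror1} with $k=0$ for the leading term, and the same crude norm bounds for \eref{initialerrormax}. The only (harmless) difference is in lower-bounding the step-size sum for $\theta<1$: you use the termwise bound $j^{-\theta}\geq t^{-\theta}$ to get $\sum_{j=1}^t \eta_j \geq \eta_1 t^{1-\theta}$, which is in fact slightly sharper and simpler than the paper's integral-comparison-plus-mean-value-theorem bound $\eta_1 t^{1-\theta}/2$, and both are absorbed into the stated constants via the factor $e$ exactly as you indicate.
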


\begin{proof}
 Note that $S_1(t)$ is given by \eref{S1t}. Thus, we have
  \be\label{eq4}
  \|S_1(t)\|_{\rho} \leq \left\| \lambda \sum_{k=1}^t \eta_k \Pi_{k+1}^t(L_{K,\lambda}) f_{\rho} \right\|_{\rho} + \| \Pi_1^t(L_{K,\lambda}) f_{\rho}\|_{\rho} .
  \ee
 With Assumption \ref{as:regularity}, we can write $f_{\rho} = L_K^{\zeta} g_0$ for some $\|g_0\|_\rho \leq R$. We thus derive
   \bea
   \|S_1(t)\|_{\rho} \leq  R \lambda \left\| \sum_{k=1}^t \eta_k  \Pi_{k+1}^t(L_{K,\lambda}) L_K^{\zeta}\right\|  +  R \| \Pi_1^t(L_{K,\lambda}) L_K^{\zeta}\| .
  \eea
 Note that $\eta_k = \eta_1 k^{-\theta}$ with $\eta_1>0$ satisfying $\eta_1 (\lambda + \kappa^2)\leq 1$ and $\theta \in [0,1]$ implies that $ 0\leq \eta_k (\lambda + \kappa^2) \leq 1$ for all $k \in \mN.$
 Thus, we can use \eref{initialerror2} and \eref{initialerror1_interm} to bound the last two terms and get that
\be\label{initialerror_interm}
   \|S_1(t)\|_{\rho} \leq  R \kappa^{2(\zeta-1)_+} \lambda^{\min(\zeta,1)}  +  R \exp\left\{ - \lambda  \sum_{k=1}^t \eta_k\right\} \left( \zeta \over e \sum_{k=1}^t \eta_k \right)^{\zeta} .
  \ee
   Observe that
  \be
  \sum_{k=1}^t k^{-\theta} \geq \sum_{k=1}^t \int_k^{k+1} x^{-\theta} dx =
  \left\{ \begin{array}
    {ll}
     {(t+1)^{1-\theta} - 1 \over 1 -\theta}, & \hbox{when} \ \theta \in [0,1),\\
    \ln (t + 1),                           & \hbox{when} \ \theta = 1,
  \end{array}
  \right.
  \ee
  and that by the mean value theorem,
  \bea
  {(t+1)^{1-\theta} - 1 \over 1 -\theta} \geq  {t (1 -\theta)(t+1)^{-\theta} \over 1-\theta} \geq {t^{1-\theta} \over 2}.
  \eea
  We thus have
  \be
  \sum_{k=1}^t \eta_k = \eta_1 \sum_{k=1}^t k^{-\theta} \geq \left\{ \begin{array}
    {ll}
    \eta_1  t^{1-\theta}/2,  & \hbox{if} \ \theta \in [0,1),\\
    \eta_1 \ln (t+1),                         & \hbox{if} \ \theta = 1,
  \end{array}
  \right.
  \ee
  and consequently,
  \begin{align*}
  \exp\left\{ - \lambda  \sum_{k=1}^t \eta_k\right\} \left( \zeta \over e \sum_{k=1}^t \eta_k \right)^{\zeta}
  \leq \left( \zeta \over  \eta_1  \right)^{\zeta} \cdot
  \left\{\begin{array}
    {ll}
    \exp\left\{ - \lambda  \eta_1 t^{1-\theta}/2 \right\}   t^{(\theta - 1)\zeta}, & \hbox{if} \ \theta \in [0,1), \\
   t^{-\eta_1 \lambda} \left\{ \ln (t+1) \right\}^{-\zeta}, & \hbox{if} \ \theta = 1.
  \end{array}
  \right.
   \end{align*}
   Putting the above inequality into \eref{initialerror_interm}, we get the desired bound \eref{initialerror}.

Besides, by \eref{eq4}, we also have
\be\label{eq5}
\|S_1(t)\|_{\rho} \leq \| \Pi_1^t(L_{K,\lambda})\| \| f_{\rho}\|_{\rho}  + \left\| \lambda \sum_{k=1}^t \eta_k \Pi_{k+1}^t(L_{K,\lambda}) \right\| \| f_{\rho}\|_{\rho} .
\ee
Since
\bea
\| \Pi_1^t(L_{K,\lambda})\| = \sup_{i} \prod_{j=1}^t (1 - \eta_k(\lambda + \sigma_i)) \leq 1,
\eea
and by setting $\zeta = 0$ in \eref{initialerror2},
\bea
\left\| \lambda \sum_{k=1}^t \eta_k \Pi_{k+1}^t(L_{K,\lambda}) \right\| \leq \mathbf{1}_{\{\lambda>0\}}.
\eea
Introducing the last two inequalities into \eref{eq5}, we get the desired bound \eref{initialerrormax}.
The proof is complete.
\end{proof}

\section{Bounding Sample Error}\label{sec:sample}
In this section, we will upper bound the sample error, i.e., the last term of \eref{errordecompostion}. We first introduce the following decomposition.
\begin{pro}\label{pro:serrors_interm}
Under Assumption \ref{as:boundedness},
let $\{f_{t+1}\}_{t=1}^T$ be generated by Algorithm \eref{Alg}, $S_2(t)$ be given by \eref{S2t}, with $M_t$  given by \eref{Mt}.
 Then for any $t \in \mN,$
\be\label{serrors_interm}
\mE_{\{z_j,v_j\}_{j=1}^t}  \left\{ \|S_2(t)\|_{\rho}^2 \right\}
\leq \kappa^2 \sum_{k=1}^t  \eta_k^2   \mE_{v_k} [\|\Pi_{k+1}^t(L_{K,\lambda}) \phi_{v_k} \|_{\rho}^2] \mE_{\{z_j,v_j\}_{j=1}^{k-1}} [\mcE(f_k)].
\ee
\end{pro}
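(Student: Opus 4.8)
The plan is to start from the definition \eref{S2t} and expand the squared norm as a double sum,
\bea
\|S_2(t)\|_{\rho}^2 = \sum_{k=1}^t \sum_{l=1}^t \eta_k \eta_l \la \Pi_{k+1}^t(L_{K,\lambda}) M_k, \Pi_{l+1}^t(L_{K,\lambda}) M_l \ra_{\rho}.
\eea
The first step is to show that, after taking the full expectation $\mE_{\{z_j,v_j\}_{j=1}^t}$, every off-diagonal term vanishes, leaving only $k=l$. For $k<l$, I would condition on $\{z_j,v_j\}_{j=1}^{l-1}$: the factor $\Pi_{k+1}^t(L_{K,\lambda})M_k$ is measurable with respect to $\{z_j,v_j\}_{j=1}^{k}$ and hence can be pulled out, while $\mE_{z_l,v_l}[M_l]=0$ by the vanishing property \eref{Mt=0}. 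This kills the inner product by the same argument used to eliminate the cross term in Proposition \ref{pro:errordecomposition}, reducing the bound to $\mE[\|S_2(t)\|_{\rho}^2] = \sum_{k=1}^t \eta_k^2\, \mE[\|\Pi_{k+1}^t(L_{K,\lambda}) M_k\|_{\rho}^2]$.

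Next I would bound each diagonal term $\mE[\|\Pi_{k+1}^t(L_{K,\lambda}) M_k\|_{\rho}^2]$. Writing $B_k = (f_k(x_k) - y_k)\phi_{v_k}(x_k)\phi_{v_k}$, the computation \eref{Mt=0} gives $\mE_{z_k,v_k}[B_k] = L_K(f_k - f_{\rho})$, so that $M_k = \mE_{z_k,v_k}[B_k] - B_k$. The crucial observation is that $\Pi_{k+1}^t(L_{K,\lambda})$ is a \emph{deterministic} bounded operator (it depends only on $L_K$ and the fixed step-sizes $\eta_{k+1},\dots,\eta_t$), so it commutes with $\mE_{z_k,v_k}$, and consequently $\mE_{z_k,v_k}[\Pi_{k+1}^t(L_{K,\lambda}) M_k] = 0$. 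Conditioning on $\{z_j,v_j\}_{j=1}^{k-1}$ (so that $f_k$ is fixed), the standard variance-versus-second-moment inequality then yields
\bea
\mE_{z_k,v_k}[\|\Pi_{k+1}^t(L_{K,\lambda}) M_k\|_{\rho}^2] \leq \mE_{z_k,v_k}[\|\Pi_{k+1}^t(L_{K,\lambda}) B_k\|_{\rho}^2],
\eea
since the subtracted $\|\mE_{z_k,v_k}[\Pi_{k+1}^t(L_{K,\lambda}) B_k]\|_{\rho}^2$ term is nonnegative.

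It then remains to unwind $B_k$. Using $\Pi_{k+1}^t(L_{K,\lambda}) B_k = (f_k(x_k)-y_k)\phi_{v_k}(x_k)\,\Pi_{k+1}^t(L_{K,\lambda})\phi_{v_k}$ and the bound $\phi_{v_k}(x_k)^2 \leq \kappa^2$ almost surely from Assumption \ref{as:boundedness}, I would factor out $\kappa^2$ and exploit the independence of $z_k$ and $v_k$ to split the expectation into $\mE_{z_k}[(f_k(x_k)-y_k)^2]$ and $\mE_{v_k}[\|\Pi_{k+1}^t(L_{K,\lambda})\phi_{v_k}\|_{\rho}^2]$. With $f_k$ fixed, the first factor is exactly $\mcE(f_k)$ by definition \eref{generalization_error}. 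Taking the outer expectation over $\{z_j,v_j\}_{j=1}^{k-1}$ (noting the second factor is deterministic) and summing over $k$ with the weights $\eta_k^2$ produces \eref{serrors_interm}. The main subtlety to get right is the bookkeeping of the conditional structure: verifying that the cross terms vanish requires the martingale-type argument above, and the variance-reduction step must carefully use that $\Pi_{k+1}^t(L_{K,\lambda})$ is deterministic so that it may be moved across $\mE_{z_k,v_k}$; everything else is then routine application of Assumption \ref{as:boundedness} and independence.
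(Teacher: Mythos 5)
Your proposal is correct and follows essentially the same route as the paper's proof: the same double-sum expansion with cross terms killed via the vanishing property \eref{Mt=0}, the same variance-versus-second-moment inequality applied to $\xi = \Pi_{k+1}^t(L_{K,\lambda})(f_k(x_k)-y_k)\phi_{v_k}(x_k)\phi_{v_k}$, and the same final use of Assumption \ref{as:boundedness} together with the independence and measurability structure to factor out $\mcE(f_k)$. The only difference is expository: you spell out explicitly that $\Pi_{k+1}^t(L_{K,\lambda})$ is deterministic and hence commutes with $\mE_{z_k,v_k}$, a point the paper leaves implicit.
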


\begin{proof}
As in the proof of Proposition \ref{pro:errordecomposition}, we will use the notation $\mE$ for $\mE_{\{z_j,v_j\}_{j=1}^t}.$
Note that $S_2(t)$ is given by \eref{S2t}. Thus, we have
\bea
\mE[\|S_2(t)\|_{\rho}^2] = \sum_{k,l=1}^t \eta_k \eta_l \mE \la \Pi_{k+1}^t(L_{K,\lambda}) M_k , \Pi_{l+1}^t(L_{K,\lambda}) M_l \ra_{\rho}.
\eea
When $k\neq l$, without loss of generality, we can assume that $k>l.$
Recall that $M_l$ is given by \eref{Mt}, and $M_l$ is depending only on $\{z_j,v_j\}_{j=1}^{l}$.
Thus, we have
\begin{align*}
&\mE \la \Pi_{k+1}^t(L_{K,\lambda}) M_k , \Pi_{l+1}^t(L_{K,\lambda}) M_l \ra_{\rho} \\
=& \mE_{\{z_j,v_j\}_{j=1}^{k-1}} \la \Pi_{k+1}^t(L_{K,\lambda}) \mE_{z_k,v_k}[M_k] ,  \Pi_{l+1}^t(L_{K,\lambda}) M_l \ra_{\rho} \\
=& 0,
\end{align*}
where we have used the vanishing property \eref{Mt=0} for the last equality.
We thus get
\bea
\mE[\|S_2(t)\|_{\rho}^2] = \sum_{k=1}^t \eta_k^2 \mE \| \Pi_{k+1}^t(L_{K,\lambda}) M_k \|_{\rho}^2.
\eea
Using the fact that for any random variable $\xi \in L_{\rho}^2,$ $$\mE\|\xi - \mE[\xi] \|_\rho^2 = \mE\|\xi\|_{\rho}^2 - \|\mE[\xi] \|_\rho^2 \leq \mE\|\xi\|_{\rho}^2,$$ with $\xi = \Pi_{k+1}^t(L_{K,\lambda}) (f_k(x_k) - y_k) \phi_{v_k}(x_k) \phi_{v_k},$ we get
\bea
\mE[\|S_2(t)\|_{\rho}^2] \leq  \sum_{k=1}^t \eta_k^2 \mE\left\{ (f_k(x_k) - y_k)^2 \phi_{v_k}^2(x_k) \|\Pi_{k+1}^t(L_{K,\lambda}) \phi_{v_k} \|_{\rho}^2 \right\}.
\eea
By Assumption \ref{as:boundedness}, $\phi_{v_k}^2(x_k) \leq \kappa^2$ almost surely. And note that $f_k$ is depending only on $\{z_j,v_j\}_{j=1}^{k-1}$.
We thus can relax the above inequality as \eref{serrors_interm}.
 The proof is complete.
\end{proof}
Based on the above proposition and using an inducted argument, one can prove the following result.
\begin{pro}\label{pro:totalerror_inter}
  Instate the assumptions and notations of Proposition \ref{pro:serrors_interm}. Assume that \eref{initialerrormax} and that
\be\label{restriction}
 \kappa^2 \max_{t \in [T]} \left\{ \sum_{k=1}^t  \eta_k^2   \mE_{v_k} [\|\Pi_{k+1}^t(L_{K,\lambda}) \phi_{v_k} \|_{\rho}^2] \right\} \leq 1/2.
\ee
Then, for any $t \in [T]$,
\be\label{totalerror_interm}
\mE_{\{z_j,v_j\}_{j=1}^t} [\|r_{t+1}\|_{\rho}^2]
\leq  \|S_1(t)\|_{\rho}^2  + \left(8\|f_{\rho}\|_{\rho}^2 + 2\mcE(f_{\rho})\right) \kappa^2 \sum_{k=1}^t  \eta_k^2   \mE_{v_k} [\|\Pi_{k+1}^t(L_{K,\lambda}) \phi_{v_k} \|_{\rho}^2] .
\ee
\end{pro}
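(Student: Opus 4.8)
The plan is to combine the error decomposition of Proposition~\ref{pro:errordecomposition} with the sample-error estimate of Proposition~\ref{pro:serrors_interm}, and then to resolve the resulting self-referential inequality by a two-stage induction. First I would substitute \eref{serrors_interm} into \eref{errordecompostion} to obtain, for every $t \in [T]$,
\[
\mE_{\{z_j,v_j\}_{j=1}^t}[\|r_{t+1}\|_\rho^2] \leq \|S_1(t)\|_\rho^2 + \sum_{k=1}^t B_k^{(t)}\, \mE_{\{z_j,v_j\}_{j=1}^{k-1}}[\mcE(f_k)],
\]
where I abbreviate $B_k^{(t)} = \kappa^2 \eta_k^2 \mE_{v_k}[\|\Pi_{k+1}^t(L_{K,\lambda})\phi_{v_k}\|_\rho^2]$. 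The crucial observation is that the excess-risk identity \eref{excesserror}, applied to $f_k \in \mcL^2_\rho$, gives $\mcE(f_k) = \|r_k\|_\rho^2 + \mcE(f_\rho)$, so that with the shorthand $a_t := \mE_{\{z_j,v_j\}_{j=1}^t}[\|r_{t+1}\|_\rho^2]$ (and $a_0 = \|f_\rho\|_\rho^2$ since $f_1 = 0$) one has $\mE[\mcE(f_k)] = a_{k-1} + \mcE(f_\rho)$. This turns the display into the recursion
\[
a_t \leq \|S_1(t)\|_\rho^2 + \sum_{k=1}^t B_k^{(t)}\,(a_{k-1} + \mcE(f_\rho)).
\]

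The heart of the argument is that this recursion is circular --- the term $\mcE(f_k)$, hence $a_{k-1}$, on the right is exactly the kind of quantity we are trying to bound --- and the restriction \eref{restriction}, which reads $\max_{t\in[T]}\sum_{k=1}^t B_k^{(t)} \leq 1/2$, is precisely what supplies the contraction needed to break the circularity. My first step is therefore a \emph{crude} uniform bound: I claim $a_t \leq M := 8\|f_\rho\|_\rho^2 + \mcE(f_\rho)$ for all $t$, proved by strong induction on $t$. The base case $a_0 = \|f_\rho\|_\rho^2 \leq M$ is clear. For the inductive step, assuming $a_0,\dots,a_{t-1} \leq M$, I bound each $a_{k-1}+\mcE(f_\rho) \leq M + \mcE(f_\rho)$ in the recursion, pull the constant out of the sum, and use $\sum_{k=1}^t B_k^{(t)} \leq 1/2$ from \eref{restriction} together with $\|S_1(t)\|_\rho^2 \leq 4\|f_\rho\|_\rho^2$ from \eref{initialerrormax} to get $a_t \leq 4\|f_\rho\|_\rho^2 + \tfrac12(M+\mcE(f_\rho))$; the choice $M = 8\|f_\rho\|_\rho^2 + \mcE(f_\rho)$ makes the right-hand side equal to $M$, closing the induction.

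With the uniform bound $a_{k-1} \leq M$ in hand, the second and final step is to re-insert it into the recursion \emph{without} discarding the initial-error term: this time I keep $\|S_1(t)\|_\rho^2$ intact and bound only $a_{k-1} + \mcE(f_\rho) \leq M + \mcE(f_\rho) = 8\|f_\rho\|_\rho^2 + 2\mcE(f_\rho)$ inside the sum, yielding
\[
a_t \leq \|S_1(t)\|_\rho^2 + \bigl(8\|f_\rho\|_\rho^2 + 2\mcE(f_\rho)\bigr)\sum_{k=1}^t B_k^{(t)},
\]
which is exactly \eref{totalerror_interm} after recalling $\sum_{k=1}^t B_k^{(t)} = \kappa^2\sum_{k=1}^t \eta_k^2 \mE_{v_k}[\|\Pi_{k+1}^t(L_{K,\lambda})\phi_{v_k}\|_\rho^2]$. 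The only genuine obstacle is the circularity just described; everything else is bookkeeping. In passing I should verify that $f_k \in \mcL^2_\rho$ so that \eref{excesserror} applies, which follows since each $\phi_{v_k}$ is bounded by $\kappa$ under Assumption~\ref{as:boundedness}, and that the strong-induction accounting is legitimate because for each fixed $t$ the sum runs only over indices $k \leq t$, invoking solely the previously established values $a_0,\dots,a_{t-1}$.
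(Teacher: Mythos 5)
Your proposal is correct and is essentially the paper's own argument: the same substitution of \eref{serrors_interm} into \eref{errordecompostion}, the same use of \eref{excesserror}, \eref{initialerrormax} and the contraction \eref{restriction}, with identical constants ($M+\mcE(f_\rho)=8\|f_\rho\|_\rho^2+2\mcE(f_\rho)$). The only difference is organizational --- you first establish the crude uniform bound $a_t\leq M$ by induction and then substitute it back once, whereas the paper runs a single induction whose hypothesis is \eref{totalerror_interm} itself at earlier indices and derives the uniform bound inside each inductive step; the computations coincide.
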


\begin{proof}
  By Proposition \ref{pro:serrors_interm}, we have \eref{serrors_interm}.
 Plugging  with \eref{excesserror}, we get
 \bea
 \begin{split}
&\mE_{\{z_j,v_j\}_{j=1}^t}  \left\{ \|S_2(t)\|_{\rho}^2 \right\} \\
 \leq&  \left(\sup_{ k\in [t]} \mE_{\{z_j,v_j\}_{j=1}^{k-1}} [\|r_k\|_{\rho}^2] + \mcE(f_{\rho})\right) \kappa^2 \sum_{k=1}^t  \eta_k^2   \mE_{v_k} [\|\Pi_{k+1}^t(L_{K,\lambda}) \phi_{v_k} \|_{\rho}^2].
\end{split}
\eea
  Introducing the above into the error decomposition \eref{errordecompostion}, we have
\be\label{eq6}
\begin{split}
&\mE_{\{z_j,v_j\}_{j=1}^t} [\|r_{t+1}\|_{\rho}^2] \\
\leq & \|S_1(t)\|_{\rho}^2  + \left(\sup_{ k\in [t]} \mE_{\{z_j,v_j\}_{j=1}^{k-1}} [\|r_k\|_{\rho}^2] + \mcE(f_{\rho})\right) \kappa^2 \sum_{k=1}^t  \eta_k^2   \mE_{v_k} [\|\Pi_{k+1}^t(L_{K,\lambda}) \phi_{v_k} \|_{\rho}^2] .
\end{split}\ee
Letting $t=1$ in the above inequality, with $r_1= f_1 - f_{\rho} = - f_{\rho}$,
\bea
\begin{split}
\mE_{z_1,v_1} [\|r_{2}\|_{\rho}^2] \leq  \|S_1(1)\|_{\rho}^2  + \left(\|f_{\rho}\|_{\rho}^2 + \mcE(f_{\rho})\right) \kappa^2 \eta_1^2   \mE_{v_1} [\|\phi_{v_1} \|_{\rho}^2].
\end{split}\eea
This verifies \eref{totalerror_interm} for $t=1$. Now for any fixed $t \in [T]$ with $t\geq 2,$ assume that \eref{totalerror_interm} holds for each $t' \in [t-1].$
In this case,
\begin{align*}
&\sup_{ k\in [t]} \mE_{\{z_j,v_j\}_{j=1}^{k-1}}[\|r_k\|_{\rho}^2] \\
 \leq& \sup_{k \in [T]} [\|S_1(k)\|_{\rho}^2]  + \left(8\|f_{\rho}\|_{\rho}^2 + 2\mcE(f_{\rho})\right) \sup_{k\in [T]} \left\{ \kappa^2 \sum_{j=1}^k  \eta_j^2   \mE_{v_j} [\|\Pi_{j+1}^k (L_{K,\lambda}) \phi_{v_j} \|_{\rho}^2] \right\}\\
\leq & 4\|f_{\rho}\|_{\rho}^2 + \left(8\|f_{\rho}\|_{\rho}^2 + 2\mcE(f_{\rho})\right) /2 \\
 =& 8\|f_{\rho}\|_{\rho}^2 + \mcE(f_{\rho}),
\end{align*}
where for the last inequality, we used \eref{initialerrormax} and \eref{restriction}.
Therefore, using \eref{eq6}, we get
\bea
\mE_{\{z_j,v_j\}_{j=1}^t}[\|r_{t+1}\|_{\rho}^2] \leq  \|S_1(t)\|_{\rho}^2  + \left(8\|f_{\rho}\|_{\rho}^2 + 2\mcE(f_{\rho})\right) \kappa^2 \sum_{k=1}^t  \eta_k^2   \mE_{v_k} [\|\Pi_{k+1}^t(L_{K,\lambda}) \phi_{v_k} \|_{\rho}^2],
\eea
which verifies the case $t.$ Thus, by an inducted argument, we prove the desired results.
\end{proof}
From the above result, we see that the sample error is upper bounded in terms of  $$\sum_{k=1}^t  \eta_k^2   \mE_{v_k} [\|\Pi_{k+1}^t(L_{K,\lambda}) \phi_{v_k} \|_{\rho}^2],$$ provided that \eref{restriction} holds.
We thus only need to estimate $\sum_{k=1}^t  \eta_k^2   \mE_{v_k} [\|\Pi_{k+1}^t(L_{K,\lambda}) \phi_{v_k} \|_{\rho}^2]$. To do so,
we  introduce the following three lemmas.
The trace of any trace operator $L : \mcL_{\rho}^2 \to \mcL_{\rho}^2 $ is denoted by $\tr(L).$ 

\begin{lemma}\label{lemma:serrors_general}
 We have for any $t \in \mN$ and any $k \in [t],$
\be\label{serrors_general}
  \mE_{v_k} [\|\Pi_{k+1}^t(L_{K,\lambda}) \phi_{v_k} \|_{\rho}^2] =  \tr (\Pi_{k+1}^t(L_{K,\lambda})^2 L_K ).
\ee
\end{lemma}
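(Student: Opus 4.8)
The plan is to compute the expectation $\mE_{v_k}[\|\Pi_{k+1}^t(L_{K,\lambda})\phi_{v_k}\|_\rho^2]$ by expanding the squared $\mcL^2_\rho$-norm as an inner product and then recognizing the resulting quantity as the trace of an operator. Writing $P = \Pi_{k+1}^t(L_{K,\lambda})$ for brevity (a bounded self-adjoint operator on $\mcL^2_\rho$, being a product of the self-adjoint operators $I - \eta_j L_{K,\lambda}$), I would start from
\be
\|P\phi_{v_k}\|_\rho^2 = \la P\phi_{v_k}, P\phi_{v_k}\ra_\rho = \la P^2 \phi_{v_k}, \phi_{v_k}\ra_\rho,
\ee
using self-adjointness of $P$. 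The goal is then to take $\mE_{v_k}$ inside and convert $\mE_{v_k}[\la P^2\phi_{v_k},\phi_{v_k}\ra_\rho]$ into $\tr(P^2 L_K)$.

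First I would make precise the link between the feature $\phi_v$ and the integral operator $L_K$. For any bounded self-adjoint operator $A$ on $\mcL^2_\rho$, I want the identity $\mE_v[\la A\phi_v,\phi_v\ra_\rho] = \tr(AL_K)$. To see this, fix an orthonormal basis $\{e_i\}$ of $\mcL^2_\rho$ (separable, as guaranteed implicitly by the Mercer setting) and expand the inner product via Parseval: $\la A\phi_v,\phi_v\ra_\rho = \sum_i \la A\phi_v, e_i\ra_\rho \la e_i,\phi_v\ra_\rho = \sum_i \la \phi_v, A e_i\ra_\rho \la \phi_v, e_i\ra_\rho$. Taking the expectation over $v$ and exchanging the sum with the integral (justified by the boundedness $\phi_v(x)\phi_v(x')\le\kappa^2$ from Assumption \ref{as:boundedness}, which keeps everything finite), the key computation is to identify $\mE_v[\la\phi_v,g\ra_\rho\la\phi_v,h\ra_\rho] = \la L_K g, h\ra_\rho$ for $g,h\in\mcL^2_\rho$. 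Expanding the inner products as integrals over $X$ and using Fubini,
\be
\mE_v[\la\phi_v,g\ra_\rho\la\phi_v,h\ra_\rho] = \int_X\int_X g(x)h(x')\left(\int_V \phi_v(x)\phi_v(x')\,d\mu(v)\right)d\rho_X(x)\,d\rho_X(x'),
\ee
and the inner $v$-integral is exactly $K(x,x')$ by the reproducing-feature identity \eref{kernel_integration}, so the double integral collapses to $\la L_K g, h\ra_\rho$ by the definition \eref{integraloper} of $L_K$. Substituting $g = Ae_i$, $h = e_i$ and summing gives $\sum_i \la L_K A e_i, e_i\ra_\rho = \tr(A L_K)$ once I check $AL_K$ (equivalently $L_K^{1/2}AL_K^{1/2}$) is trace class, which follows since $L_K$ is trace class and $A$ is bounded.

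Applying this with $A = P^2 = \Pi_{k+1}^t(L_{K,\lambda})^2$ yields $\mE_{v_k}[\|P\phi_{v_k}\|_\rho^2] = \tr(P^2 L_K) = \tr(\Pi_{k+1}^t(L_{K,\lambda})^2 L_K)$, which is exactly \eref{serrors_general}. I would note that the case $k=t$ is covered by the convention $\Pi_{t+1}^t(L_{K,\lambda}) = I$, giving $\mE_{v_t}[\|\phi_{v_t}\|_\rho^2] = \tr(L_K)$, consistent with the general formula. The main obstacle is the rigorous justification of interchanging $\mE_v$ with the infinite basis sum and with the spatial integrals; the boundedness assumption $\phi_v(x)\phi_v(x')\le\kappa^2$ together with the trace-class property of $L_K$ provides the dominated-convergence / Fubini control needed, so this is a matter of careful bookkeeping rather than a genuine difficulty. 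The algebraic heart of the argument is simply that the feature covariance $\int_V \phi_v\otimes\phi_v\,d\mu(v)$ reproduces the kernel, hence the operator $L_K$.
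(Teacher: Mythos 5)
Your proof is correct and follows essentially the same route as the paper: both hinge on the identity $\mE_v[\la \phi_v, g\ra_\rho \la \phi_v, h\ra_\rho] = \la L_K g, h\ra_\rho$ (i.e., the feature covariance $\mE_v[\phi_v\otimes\phi_v]$ equals $L_K$), proved via Fubini and the kernel representation \eref{kernel_integration}, followed by cyclicity/self-adjointness to reach $\tr(\Pi_{k+1}^t(L_{K,\lambda})^2 L_K)$. The only cosmetic difference is that the paper phrases the first step as $\mE_{v}\tr(\Pi\,\phi_{v}\otimes\phi_{v}\,\Pi)$ and swaps expectation with the trace, whereas you expand in an orthonormal basis with Parseval; these are the same bookkeeping.
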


\begin{proof}
Since
\begin{align*}
\mE_{v_k}[\|\Pi_{k+1}^t(L_{K,\lambda}) \phi_{v_k} \|_{\rho}^2] =& \mE_{v_k} \tr (\Pi_{k+1}^t(L_{K,\lambda}) \phi_{v_k} \otimes \phi_{v_k} \Pi_{k+1}^t(L_{K,\lambda})) \\
=&  \tr (\Pi_{k+1}^t(L_{K,\lambda}) \mE_{v_k} [\phi_{v_k} \otimes \phi_{v_k}] \Pi_{k+1}^t(L_{K,\lambda})),
\end{align*}
and  for any $f,g \in L^2_{\rho},$ (see, e.g., \cite{bach2017equivalence})
\begin{align*}
&\la \mE_{v}[\phi_v \otimes \phi_v] f, g \ra_{\rho} = \mE_v \la \phi_v,f \ra_{\rho} \la \phi_v, g\ra_{\rho} \\
=& \int_V \int_X \int_X f(x) g(t) \phi_v(x) \phi_v(t) d \rho_X(x) d \rho_X(t) d \mu(v)  \\
=& \int_X \int_X f(x) g(t) K(x,t) d \rho_X(x) d \rho_{X}(t) \\
=& \la L_K f, g \ra_{\rho},
\end{align*}
where for the third equality, we used \eref{kernel_integration}.
Therefore,
$$
\mE_{v_k}[\|\Pi_{k+1}^t(L_{K,\lambda}) \phi_{v_k} \|_{\rho}^2] =  \tr (\Pi_{k+1}^t(L_{K,\lambda}) L_K \Pi_{k+1}^t(L_{K,\lambda})),
$$
which leads to the desired result \eref{serrors_general}. The proof is complete.
\end{proof}

\begin{lemma}\label{lemma:serrors_general_interm1}
Under Assumptions \ref{as:boundedness} and \ref{as:eigenvalues}, let $\lambda \geq 0$, and $\eta_k \in \mR^+$ be such that $\eta_k (\lambda + \kappa^2) \leq 1$ for all $t \in \mN$.
Then for any $k, t \in \mN$ with $k \leq t-1$, we have
\be\label{serrors_general_interm1}
  \tr (\Pi_{k+1}^t(L_{K,\lambda})^2 L_K ) \leq 2 c_{\gamma}  \exp\left\{ - 2 \lambda  \sum_{j=k+1}^t \eta_j\right\} \left( 2e \sum_{j=k+1}^t \eta_j  \right)^{\gamma-1}.
\ee
\end{lemma}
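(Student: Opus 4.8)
The plan is to reduce the trace to a scalar spectral sum and then convert the resulting exponential weights into the effective-dimension quantity of Assumption \ref{as:eigenvalues}. Since $L_K$ is self-adjoint, compact and positive on $\mcL^2_\rho$, write its eigenvalues as $\kappa^2 \geq \sigma_1 \geq \sigma_2 \geq \cdots \geq 0$. Both $\Pi_{k+1}^t(L_{K,\lambda})$ and $L_K$ are functions of $L_K$, hence share its eigenbasis, so by the spectral theorem the trace diagonalizes as
\be
\tr(\Pi_{k+1}^t(L_{K,\lambda})^2 L_K) = \sum_i \sigma_i \prod_{j=k+1}^t (1-\eta_j(\lambda+\sigma_i))^2 .
\ee
First I would record that the hypothesis $\eta_j(\lambda+\kappa^2)\le 1$ together with $\sigma_i \le \kappa^2$ forces each factor $\eta_j(\lambda+\sigma_i)$ into $[0,1]$, so that the elementary inequality $1-x \le e^{-x}$ applies to every factor.

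Writing $s=\sum_{j=k+1}^t \eta_j$, this gives $\prod_{j=k+1}^t (1-\eta_j(\lambda+\sigma_i))^2 \le \exp\{-2s(\lambda+\sigma_i)\} = e^{-2\lambda s}\, e^{-2\sigma_i s}$, and therefore
\be
\tr(\Pi_{k+1}^t(L_{K,\lambda})^2 L_K) \le e^{-2\lambda s} \sum_i \sigma_i e^{-2\sigma_i s}.
\ee
The prefactor $e^{-2\lambda s}$ already matches the exponential in the target, so it remains to prove $\sum_i \sigma_i e^{-2\sigma_i s} \le 2 c_\gamma (2es)^{\gamma-1}$.

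The heart of the argument is to relate this exponentially weighted sum to the effective dimension $\sum_i \sigma_i/(\sigma_i+\mu) = \tr(L_K(L_K+\mu I)^{-1})$. For a free parameter $\mu \in (0, 1/(2s)]$ I would establish the pointwise inequality $(\sigma+\mu)e^{-2\sigma s} \le \tfrac{1}{2s} e^{2\mu s - 1}$ for all $\sigma \ge 0$, obtained by maximizing $\sigma \mapsto (\sigma+\mu)e^{-2\sigma s}$ at $\sigma = \tfrac1{2s}-\mu$. Dividing by $\sigma+\mu$ gives $\sigma e^{-2\sigma s} \le \tfrac{e^{2\mu s-1}}{2s}\tfrac{\sigma}{\sigma+\mu}$; summing over $i$ and invoking Assumption \ref{as:eigenvalues} yields
\be
\sum_i \sigma_i e^{-2\sigma_i s} \le \frac{e^{2\mu s - 1}}{2s}\, c_\gamma \mu^{-\gamma}.
\ee
Minimizing the right-hand side over $\mu$ leads to the choice $\mu = \gamma/(2s)$ (legitimate since $\gamma\le 1$ keeps it in the admissible range), which produces exactly $c_\gamma\, \gamma^{-\gamma} (2es)^{\gamma-1}$; the uniform bound $\gamma^{-\gamma}\le e^{1/e}<2$ on $[0,1]$ then absorbs the residual factor and gives the stated $2c_\gamma(2es)^{\gamma-1}$. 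The degenerate case $\gamma=0$ is handled by letting $\mu\downarrow 0$.

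I expect the main obstacle to be precisely this last conversion: the left-hand side decays exponentially in $\sigma_i s$, whereas Assumption \ref{as:eigenvalues} only controls a polynomially weighted sum, so everything hinges on choosing the auxiliary regularization level $\mu \asymp \gamma/s$ to balance the exponential prefactor $e^{2\mu s}$ against the $\mu^{-\gamma}$ growth. The spectral reduction and the $1-x\le e^{-x}$ step are routine; it is the calculus optimization in $\mu$ and the uniform estimate $\gamma^{-\gamma}\le 2$ that pin down the precise constants $2e$ and $2$ appearing in the statement.
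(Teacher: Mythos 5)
Your proof is correct, and though it rests on the same core mechanism as the paper's---introduce an auxiliary regularization level $\mu$, invoke Assumption \ref{as:eigenvalues} at that level, and tune $\mu \asymp 1/s$ where $s=\sum_{j=k+1}^t \eta_j$---the execution is genuinely different. The paper never diagonalizes: it inserts $(L_K+\lambda_0)(L_K+\lambda_0)^{-1}$ inside the trace, uses the operator inequality $\tr(AB)\le\|A\|\tr(B)$ to split off $\tr(L_K(L_K+\lambda_0)^{-1})\le c_\gamma\lambda_0^{-\gamma}$, bounds the remaining operator norm $\|(L_K+\lambda_0)\Pi_{k+1}^t(L_{K,\lambda})^2\|$ by reusing Lemma \ref{lemma:initialerror1} (with $\zeta=1/2$), and then simply plugs in $\lambda_0=1/(2es)$, which yields the factor $(1/(2es)+\lambda_0)\lambda_0^{-\gamma}=2(2es)^{\gamma-1}$ with no optimization and covers $\gamma=0$ and $\gamma\in(0,1]$ uniformly. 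You instead work entirely at the scalar level: full spectral expansion of the trace, $1-x\le e^{-x}$ to peel off $e^{-2\lambda s}$, the pointwise bound $(\sigma+\mu)e^{-2\sigma s}\le e^{2\mu s-1}/(2s)$, and an exact minimization landing on $\mu=\gamma/(2s)$, at the cost of the extra estimate $\gamma^{-\gamma}\le e^{1/e}<2$ and a separate limiting argument for $\gamma=0$. What your route buys is self-containedness and transparency: no trace-norm inequality for operators and no appeal to Lemma \ref{lemma:initialerror1}, only calculus. What the paper's route buys is brevity and a cleaner constant, since the direct choice $\lambda_0=1/(2es)$ makes the factor $2$ fall out for free in all cases. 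Both arguments arrive at exactly the stated bound $2c_\gamma e^{-2\lambda s}(2es)^{\gamma-1}$.
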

\begin{proof}
 Recall that $L_K$ is a self-adjoint, compact, positive operator on $\mcL^2_{\rho}$, and $L_K$ has only non-negative singular values $\{\sigma_i\}_{i=1}^{\infty}$ such that $\kappa^2 \geq \sigma_1 \geq \sigma_2 \geq \cdots \geq 0.$
  Fix $ k \in [t-1].$
For any $\lambda_0>0,$
\begin{align*}
& \tr (\Pi_{k+1}^t(L_{K,\lambda})^2 L_K ) \\
=& \tr ((L_K+\lambda_0) \Pi_{k+1}^t( L_{K,\lambda})^2 L_K (L_K+\lambda_0)^{-1}) \\
\leq& \|(L_K+\lambda_0) \Pi_{k+1}^t( L_{K,\lambda})^2\| \tr ( L_K (L_K+\lambda_0)^{-1})\\
\leq& \|(L_K+\lambda_0) \Pi_{k+1}^t( L_{K,\lambda})^2\| c_{\gamma} \lambda_0^{-\gamma},
\end{align*}
where for the last inequality we used Assumption \ref{as:eigenvalues}.
Note that by Lemma \ref{lemma:initialerror1}, we have
  \bea
  \|\Pi_{k+1}^t(L_{K,\lambda})^2 L_K\| = \|\Pi_{k+1}^t(L_{K,\lambda}) L_K^{1/2}\|^2 \leq \exp\left\{ - 2 \lambda  \sum_{j=k+1}^t \eta_j\right\} \left( 1 \over 2 e \sum_{j=k+1}^t \eta_j \right),
  \eea
  and
  \bea
   \|\Pi_{k+1}^t(L_{K,\lambda})^2\| \leq \exp\left\{ - 2\lambda  \sum_{j=k+1}^t \eta_j\right\}.
  \eea
Therefore, we get
\bea
 \tr (\Pi_{k+1}^t(L_{K,\lambda})^2 L_K ) 
\leq c_{\gamma} \exp\left\{ - 2\lambda  \sum_{j=k+1}^t \eta_j\right\} \left( { 1 \over 2 e \sum_{j=k+1}^t \eta_j } + \lambda_0 \right)  \lambda_0^{-\gamma}.
\eea
Choosing $\lambda_0 = {1 \over 2e \sum_{j=k+1}^t \eta_j},$ we can get the desired result. The proof is complete.
\end{proof}

\begin{lemma}\label{lemma:serrors_general_interm2}
Let $c \geq 0, \gamma \in [0,1]$ and $\theta \in [0,1].$ Then for any $t\in \mN$ with $t\geq 2,$
  \be
  \begin{split}
  &\sum_{k=1}^{t-1} k^{-2\theta} \exp\left\{ - c  \sum_{j=k+1}^t j^{-\theta} \right\} \left( \sum_{j=k+1}^t j^{-\theta}  \right)^{\gamma-1} \\
 \leq&  2^{2\theta-\gamma} t^{\gamma -\theta(\gamma + 1)} \ln(\mathrm{2} t) \left(  \exp\left\{ - c t^{1-\theta}/2 \right\} t^{(2\theta-1)_+}
  +   \min\left(  1, \left({ t^{1 - \theta} c}\right)^{-\gamma} \right) \right).
  \end{split}
  \ee
\end{lemma}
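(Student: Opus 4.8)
The plan is to split the outer sum at the midpoint $k=\lfloor t/2\rfloor$ and to control the inner partial sum $A_k:=\sum_{j=k+1}^{t}j^{-\theta}$ by the elementary ``number of terms times extreme term'' bounds
\[
(t-k)\,t^{-\theta}\le A_k\le 2^{\theta}(t-k)\,t^{-\theta}\ \ (k> t/2),\qquad A_k\ge (t-k)\,t^{-\theta}\ge \tfrac12 t^{1-\theta}\ \ (k\le t/2),
\]
which hold for every $\theta\in[0,1]$ (including $\theta=1$) and sidestep any integral comparison for $A_k$ itself. Since $\gamma-1\le 0$, the map $s\mapsto e^{-cs}s^{\gamma-1}$ is nonincreasing, so on each region I may replace $A_k$ by the relevant one-sided bound in both the exponential and the power. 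The first region will produce the first summand on the right, the second region the second summand.

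For the first region $k\le \lfloor t/2\rfloor$ I use $A_k\ge t^{1-\theta}/2$ to factor out $\exp\{-ct^{1-\theta}/2\}\,(t^{1-\theta}/2)^{\gamma-1}$, leaving the residual sum $\sum_{k=1}^{\lfloor t/2\rfloor}k^{-2\theta}$. This I bound by $\sum_{k\le t/2}k^{-1}\le \ln(2t)$ when $2\theta\ge 1$ (using $k^{-2\theta}\le k^{-1}$), and by a constant times $t^{1-2\theta}\ln(2t)$ when $2\theta<1$, where the inequality $\tfrac{a^{s}-1}{s}\le a^{s}\ln a$ ($s>0$, here $s=1-2\theta$, $a=t/2$) keeps the factor logarithmic rather than producing a spurious $(1-2\theta)^{-1}$. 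Collecting terms, the first-region sum is at most a constant times $\exp\{-ct^{1-\theta}/2\}\,t^{(1-\theta)(\gamma-1)+(1-2\theta)_+}\ln(2t)$, and the algebraic identity $(1-\theta)(\gamma-1)=[\gamma-\theta(\gamma+1)]+(2\theta-1)$ together with $(2\theta-1)+(1-2\theta)_+=(2\theta-1)_+$ rewrites the exponent as exactly $\gamma-\theta(\gamma+1)+(2\theta-1)_+$; tracking the powers of $2$ produces a prefactor of the stated form $2^{2\theta-\gamma}$.

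For the second region $\lfloor t/2\rfloor<k\le t-1$ I substitute $m=t-k$, bound $k^{-2\theta}\le 2^{2\theta}t^{-2\theta}$ and $A_k\ge m t^{-\theta}$, and reduce the sum to $2^{2\theta}t^{-\theta(1+\gamma)}$ times $\Sigma:=\sum_{m}m^{\gamma-1}e^{-bm}$ with $b=c t^{-\theta}$. I estimate $\Sigma$ by splitting at $m\simeq 1/b$: the part $m\lesssim 1/b$ is controlled by $\sum_{m\le m_0}m^{\gamma-1}\le m_0^{\gamma}\ln(e m_0)$ (again via $\tfrac{a^{s}-1}{s}\le a^{s}\ln a$), which gives the two regimes combined into $\min(t^{\gamma},b^{-\gamma})$ once I note $m\le t/2$ and $1/b<t$; the tail $m\gtrsim 1/b$ is summed geometrically and is $O(b^{-\gamma})$. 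Recombining and using $t^{1-\theta}c=tb$, the second-region sum is at most a constant times $t^{\gamma-\theta(\gamma+1)}\ln(2t)\,\min\{1,(t^{1-\theta}c)^{-\gamma}\}$, which is the second summand on the right.

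The hard part is uniformity rather than any individual estimate: the target exponents $\gamma-\theta(\gamma+1)$ and $(2\theta-1)_+$ degenerate precisely at $2\theta=1$ and at $\gamma=0$, and it is exactly there that $\sum k^{-2\theta}$ and $\sum m^{\gamma-1}$ acquire a genuine logarithm, so the role of the $\ln(2t)$ factor is to absorb these boundary effects uniformly. This is also why I avoid the Gamma-integral bound $\int_0^{\infty}x^{\gamma-1}e^{-bx}\,dx=\Gamma(\gamma)b^{-\gamma}$, which blows up like $1/\gamma$ as $\gamma\to 0$, in favour of summing from $m=1$ and invoking $\tfrac{a^{s}-1}{s}\le a^{s}\ln a$; keeping the constant pinned at $2^{2\theta-\gamma}$ is what forces the midpoint split together with the ``count times extreme term'' bounds for $A_k$.
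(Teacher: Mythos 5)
Your proof skeleton coincides with the paper's own: the same midpoint split of the outer sum, the same replacement of $A_k=\sum_{j=k+1}^t j^{-\theta}$ by the count-times-smallest-term lower bound $(t-k)t^{-\theta}$ (legitimate because $s\mapsto e^{-cs}s^{\gamma-1}$ is nonincreasing), the same device of trading $k^{-2\theta}$ for $n^{(1-2\theta)_+}k^{-1}$-type bounds (the paper's \eref{BasicSum}) so that only a single $\ln(2t)$ appears, and the same exponent algebra $(1-\theta)(\gamma-1)=\gamma-\theta(\gamma+1)+(2\theta-1)$, $(2\theta-1)+(1-2\theta)_+=(2\theta-1)_+$. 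Your first region is correct and does yield the prefactor $2^{2\theta-\gamma}$.

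The divergence, and the gap, is in your estimate of $\Sigma=\sum_{1\le m\le t/2}m^{\gamma-1}e^{-bm}$ with $b=ct^{-\theta}$. After factoring out $2^{2\theta}t^{-\theta(\gamma+1)}$, the stated inequality requires $\Sigma\le 2^{-\gamma}\ln(2t)\min(t^{\gamma},b^{-\gamma})$. The paper gets exactly this from the pointwise sup-bound \eref{exppoly}: writing $e^{-bm}m^{\gamma-1}=e^{-bm}(bm)^{\gamma}\,b^{-\gamma}m^{-1}\le(\gamma/e)^{\gamma}b^{-\gamma}m^{-1}$ and then summing the harmonic series $\sum_{m\le t/2}m^{-1}\le\ln(et/2)$; the key point is $(\gamma/e)^{\gamma}\le 2^{-\gamma}$ for $\gamma\in[0,1]$, which is precisely where the constant $2^{-\gamma}$ comes from. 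Your split at $m\simeq 1/b$ with a geometric tail gives only $\Sigma\le b^{-\gamma}\ln(e/b)+O(b^{-\gamma})$ --- the right order, but with an absolute constant that is not $\le 2^{-\gamma}$. Concretely, take $\gamma=1$, $t=4$, $b=1$: your head bound (which drops the exponential) is $1$, your geometric tail bound is $e^{-2}/(1-e^{-1})\approx 0.21$, so your chain gives $\approx 1.21$, while the quantity you must not exceed is $2^{-1}\ln(8)\approx 1.04$ (the lemma itself is still true there, since the actual value is $e^{-1}+e^{-2}\approx 0.50$). So as written your argument proves the lemma only with $2^{2\theta-\gamma}$ replaced by a larger absolute constant, and your closing claim that the constant stays pinned at $2^{2\theta-\gamma}$ is not delivered by the tail argument. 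This costs nothing at the level of rates, but the explicit constant is part of the statement and is consumed downstream (it enters Proposition \ref{pro:TraceSumEst} and the step-size restrictions \eref{etaT} and \eref{eta1restriction}), so to prove the lemma as stated you should replace the split-at-$1/b$ step by the sup-bound trick \eref{exppoly}.
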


\begin{proof}
For any $j \in [k+1,t],$ we have $j^{-\theta} \geq t^{-\theta}.$ Thus,
\begin{align*}
& \sum_{k=1}^{t-1} k^{-2\theta} \exp\left\{ - c  \sum_{j=k+1}^t j^{-\theta} \right\} \left( \sum_{j=k+1}^t j^{-\theta}  \right)^{\gamma-1}  \\
 \leq& \sum_{k=1}^{t-1} k^{-2\theta} \exp\left\{ - c (t-k) t^{-\theta}  \right\} \left( (t-k) t^{-\theta} \right)^{\gamma - 1}.
\end{align*}
For $k \leq (t-1)/2$, we have $
  (t-k) t^{-\theta} \geq {t^{1-\theta} / 2}.
  $
Therefore,
\begin{align*}
&  \sum_{k\leq (t-1)/2} k^{-2\theta} \exp\left\{ - c  \sum_{j=k+1}^t j^{-\theta} \right\} \left( \sum_{j=k+1}^t j^{-\theta}  \right)^{\gamma-1} \\
\leq&  \exp\left\{ - c t^{1-\theta}/2 \right\} 2^{1-\gamma} t^{(1-\theta)(\gamma - 1)} \sum_{k\leq (t-1)/2}  k^{-2\theta}.
\end{align*}
Applying
 \be\label{BasicSum} \sum_{k=1}^{t} k^{-\theta'} \leq
 t^{\max(1 - \theta',0)}\sum_{k=1}^{t} k^{-1} \leq t^{(1 - \theta')_+} { \ln (et)}
 \ee
to bound $\sum_{k\leq (t-1)/2}  k^{-2\theta}$, we get
\begin{align*}
& \sum_{k\leq (t-1)/2} j^{-2\theta} \exp\left\{ - c  \sum_{j=k+1}^t j^{-\theta} \right\} \left( \sum_{j=k+1}^t j^{-\theta}  \right)^{\gamma-1} \\
\leq& 2^{1-\gamma -(1-2\theta)_+}\exp\left\{ - c t^{1-\theta}/2 \right\}  t^{(1-\theta)(\gamma - 1) + (1-2\theta)_+} \ln(e t/2) \\
\leq& 2^{\min(1,2\theta)-\gamma} \exp\left\{ - c t^{1-\theta}/2 \right\} t^{\gamma -\theta(\gamma + 1)} t^{(2\theta-1)_+} \ln (\mathrm{2}t).
\end{align*}
For $t/2 \leq k \leq t-1,$ $k^{-2\theta} \leq 2^{2\theta} t^{-2\theta}$. We thus have
\begin{align*}
& \sum_{k\geq t/2}^{t-1}  k^{-2\theta}  \exp\left\{ - c (t-k) t^{-\theta}  \right\} \left( (t-k) t^{-\theta} \right)^{\gamma - 1} \\
\leq&  2^{2\theta} t^{-2\theta} \sum_{k\geq t/2}^{t-1}  \exp\left\{ - c (t-k) t^{-\theta}  \right\} \left( (t-k) t^{-\theta} \right)^{\gamma - 1} \\
=& 2^{2\theta} t^{-\theta(\gamma + 1)} \sum_{1\leq k \leq t/2}  \exp\left\{ - c t^{-\theta} k  \right\}  k^{\gamma - 1}.
\end{align*}
On the one hand, for any $c \geq 0$ and $\gamma \geq 0$, by \eref{BasicSum},
\bea
 \sum_{1\leq k \leq t/2} \exp\left\{ - c t^{-\theta} k  \right\}   k^{\gamma - 1}
\leq \sum_{1\leq k \leq t/2}   k^{\gamma - 1} \leq 2^{-\gamma}  t^{\gamma} \ln(et/2) \leq 2^{-\gamma}  t^{\gamma} \ln(\mathrm{2}t).
\eea
On the other hand,
when $c > 0$ and $\gamma>0,$ we subsequently apply \eref{exppoly} (see the appendix) with $x = t^{-\theta} k,$ $\zeta = \gamma$, and \eref{BasicSum} to get
\bea
\sum_{1\leq k \leq t/2}  \exp\left\{ - c t^{-\theta} k  \right\}  k^{\gamma - 1} \leq \left({\gamma \over ec}\right)^{\gamma} t^{\theta \gamma} \sum_{1\leq k \leq t/2} k^{ - 1} \leq 2 ^{-\gamma} c^{-\gamma} t^{\theta \gamma} \ln(\mathrm{2} t) .
\eea
Note that the above inequality also holds for $c = 0,\gamma \geq 0,$ or $c>0, \gamma=0$, as we used the conventional notations that $0^0 = 1$ and $1/0 = \infty.$
Consequently, we derive
\begin{align*}
& \sum_{k\geq t/2}^{t-1}  k^{-2\theta}  \exp\left\{ - c (t-k) t^{-\theta}  \right\} \left( (t-k) t^{-\theta} \right)^{\gamma - 1} \\
\leq&  2^{2\theta - \gamma} t^{-\theta(\gamma + 1)} \ln(\mathrm{2} t) \min\left( t^{\gamma}, c^{-\gamma} t^{\theta \gamma} \right)\\
=& 2^{2\theta -\gamma} t^{\gamma -\theta(\gamma + 1)} \ln(\mathrm{2} t) \min\left(  1, \left({ t^{1 - \theta} c}\right)^{-\gamma} \right).
\end{align*}
From the above analysis, one can conclude the proof.
\end{proof}
We now can estimate the term related to sample error as follows.
\begin{pro}\label{pro:TraceSumEst}
Under Assumptions \ref{as:boundedness} and \ref{as:eigenvalues}, let $\eta_k  = \eta_1 k^{-\theta}$ for all $k \in \mN$,
with $0<\eta_1 \leq 1/(\lambda + \kappa^2),$ $\theta \in [0,1]$
and $\lambda \geq 0.$  Then, for all $t \in \mN,$
 \be\label{traceSumEst}
 \sum_{k=1}^{t}  \eta_k^2   \mE_{v_k} [\|\Pi_{k+1}^t(L_{K,\lambda}) \phi_{v_k} \|_{\rho}^2] \leq (2^{2\theta +\gamma -1}c_{\gamma} + \kappa^2) \mcF_{\eta_1,\theta, \lambda,\gamma}(t),
 \ee
 where
 \be\label{mcF}
 \begin{split}
  \mcF_{\eta_1,\theta, \lambda,\gamma}(t) =
  \eta_1^{\gamma+1} t^{\gamma -\theta(\gamma + 1)} \ln(\mathrm{2} t) \left(  \exp\left\{ - \lambda \eta_1 t^{1-\theta} \right\} t^{(2\theta-1)_+}
  +   1 \right).
  \end{split}
 \ee
\end{pro}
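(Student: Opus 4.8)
The plan is to evaluate every summand exactly as a trace, estimate the bulk of the sum with the two trace lemmas, and treat the final index by hand. First I would apply Lemma \ref{lemma:serrors_general} to replace each expectation, so that the left-hand side of \eref{traceSumEst} equals $\sum_{k=1}^t\eta_k^2\,\tr(\Pi_{k+1}^t(L_{K,\lambda})^2 L_K)$. The hypothesis $\eta_k=\eta_1 k^{-\theta}\le\eta_1\le 1/(\lambda+\kappa^2)$ ensures $\eta_k(\lambda+\kappa^2)\le 1$, so Lemma \ref{lemma:serrors_general_interm1} applies to every interior index. I would isolate the term $k=t$: there $\Pi_{t+1}^t(L_{K,\lambda})=I$ and $\sum_{j=t+1}^t\eta_j=0$, so Lemma \ref{lemma:serrors_general_interm1} (whose right-hand side diverges as the exponent sum tends to $0$ when $\gamma<1$) cannot be used, and instead I would bound this term directly by $\eta_t^2\,\tr(L_K)\le\kappa^2\eta_1^2 t^{-2\theta}$, using $\tr(L_K)=\sum_i\sigma_i\le\kappa^2$.

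For the interior indices $1\le k\le t-1$ I would substitute $\sum_{j=k+1}^t\eta_j=\eta_1\sum_{j=k+1}^t j^{-\theta}$ into the bound of Lemma \ref{lemma:serrors_general_interm1} and pull the $\eta_1$-powers out of the summation, reducing the interior sum to
\[
2c_\gamma\,\eta_1^2(2e\eta_1)^{\gamma-1}\sum_{k=1}^{t-1}k^{-2\theta}\exp\Big\{-2\lambda\eta_1\textstyle\sum_{j=k+1}^t j^{-\theta}\Big\}\Big(\textstyle\sum_{j=k+1}^t j^{-\theta}\Big)^{\gamma-1}.
\]
Here I would invoke Lemma \ref{lemma:serrors_general_interm2} with $c=2\lambda\eta_1$, which is legitimate since $\gamma,\theta\in[0,1]$ and $c\ge 0$; note that the choice $c=2\lambda\eta_1$ makes the factor $\exp\{-ct^{1-\theta}/2\}$ collapse to $\exp\{-\lambda\eta_1 t^{1-\theta}\}$, exactly the exponential appearing in $\mcF_{\eta_1,\theta,\lambda,\gamma}$. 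Bounding $\min(1,(2\lambda\eta_1 t^{1-\theta})^{-\gamma})\le 1$ then yields the $+1$ inside the bracket of $\mcF_{\eta_1,\theta,\lambda,\gamma}$, and multiplying out the leading constants gives $2c_\gamma(2e)^{\gamma-1}2^{2\theta-\gamma}\eta_1^{\gamma+1}=2^{2\theta}e^{\gamma-1}c_\gamma\eta_1^{\gamma+1}$.

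The decisive simplification is to replace $e^{\gamma-1}$ by $2^{\gamma-1}$: since $e>2$ and $\gamma-1\le 0$, one has $e^{\gamma-1}\le 2^{\gamma-1}$, turning the interior constant into exactly $2^{2\theta+\gamma-1}c_\gamma$ in front of $\mcF_{\eta_1,\theta,\lambda,\gamma}(t)$. It then remains to fold the boundary term $\kappa^2\eta_1^2 t^{-2\theta}$ into the $\kappa^2\,\mcF_{\eta_1,\theta,\lambda,\gamma}(t)$ summand; since $\mcF_{\eta_1,\theta,\lambda,\gamma}(t)\ge\eta_1^{\gamma+1}t^{\gamma-\theta(\gamma+1)}\ln(2t)$, this reduces to $\eta_1^{1-\gamma}t^{-\theta-\gamma+\theta\gamma}\le\ln(2t)$, whose left side is at most $1$ because $\eta_1\le 1$ and the exponent $-(\theta+\gamma(1-\theta))$ is nonpositive. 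This is where I expect the main obstacle: the crude comparison $1\le\ln(2t)$ holds only for $t\ge 2$, so $t=1$ must be verified separately as a base case, there exploiting that the bracket of $\mcF_{\eta_1,\theta,\lambda,\gamma}(1)$ equals $\exp\{-\lambda\eta_1\}+1>1$ and that the full coefficient $2^{2\theta+\gamma-1}c_\gamma+\kappa^2$ (rather than $\kappa^2$ alone) is available. Apart from this boundary check and the constant bookkeeping, the argument is a direct chaining of the three lemmas.
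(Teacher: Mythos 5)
Correct, and it is essentially the paper's own proof: pass to traces via Lemma~\ref{lemma:serrors_general}, bound the $k=t$ term by $\eta_t^2\tr(L_K)\le\kappa^2\eta_t^2$, apply Lemma~\ref{lemma:serrors_general_interm1} and then Lemma~\ref{lemma:serrors_general_interm2} with $c=2\lambda\eta_1$ to the interior indices, and use $e^{\gamma-1}\le 2^{\gamma-1}$ to arrive at the constant $2^{2\theta+\gamma-1}c_\gamma$. Your handling of the boundary term is, if anything, slightly more careful than the paper's, which absorbs $\kappa^2\eta_t^2$ into $\kappa^2\mcF_{\eta_1,\theta,\lambda,\gamma}(t)$ through the tacit inequality $\ln(2t)\cdot(\cdot)\ge 1$ without remarking that this step requires the separate check at $t=1$ that you flag.
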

\begin{proof}
Following from Lemma \ref{lemma:serrors_general} and $\tr(L_K) \leq \kappa^2$, we have
\be\label{eq7}
\sum_{k=1}^{t}  \eta_k^2   \mE_{v_k} [\|\Pi_{k+1}^t(L_{K,\lambda}) \phi_{v_k} \|_{\rho}^2] \leq  \sum_{k=1}^{t-1}  \eta_k^2 \tr (\Pi_{k+1}^t(L_{K,\lambda})^2 L_K ) + \eta_t^2 \kappa^2.
\ee
Note that $\eta_j = \eta_1 j^{-\theta}$ implies $\eta_j (\lambda + \sigma_i) \leq \eta_1(\lambda + \kappa^2) \leq 1$ for all $j\in \mN.$
Applying Lemma \ref{lemma:serrors_general_interm1}, we get
\begin{align*}
& \sum_{k=1}^{t-1}  \eta_k^2   \mE_{v_k} [\|\Pi_{k+1}^t(L_{K,\lambda}) \phi_{v_k} \|_{\rho}^2] \\
\leq&  2 c_{\gamma} \sum_{k=1}^{t-1}  \eta_k^2   \exp\left\{ - 2 \lambda  \sum_{j=k+1}^t \eta_j\right\} \left( 2e \sum_{j=k+1}^t \eta_j  \right)^{\gamma-1}\\
\leq & 2^{2\gamma -1} c_{\gamma} \eta_1^{\gamma+1} \sum_{k=1}^{t-1} k^{-2\theta}     \exp\left\{ - 2 \eta_1 \lambda  \sum_{j=k+1}^t j^{-\theta}\right\} \left( \sum_{j=k+1}^t j^{-\theta}  \right)^{\gamma-1}.
\end{align*}
 Using Lemma \ref{lemma:serrors_general_interm2} with $c = 2\eta_1 \lambda,$ by a direct computation, we get
\bea
\sum_{k=1}^{t-1}  \eta_k^2   \mE_{v_k} [\|\Pi_{k+1}^t(L_{K,\lambda}) \phi_{v_k} \|_{\rho}^2] \leq 2^{2\theta + \gamma - 1} c_{\gamma} \mcF_{\eta_1,\theta, \lambda,\gamma}(t).
\eea
Introducing the above inequality into \eref{eq7}, and by using the fact that since $\eta_1 \leq 1/(\lambda + \kappa^2)$, $\kappa^2\geq 1$ and $\gamma,\theta \in[0,1]$,  
\bea
\eta_t^2 = \eta_1^2 t^{-2\theta} =
\eta_1^{1+\gamma} t^{\gamma - \theta(\gamma+1)} \eta_1^{1 - \gamma} t^{(\theta-1)\gamma - \theta} \leq \eta_1^{1+\gamma} t^{\gamma - \theta(\gamma+1)}  \leq \mcF_{\eta_1,\theta, \lambda,\gamma}(t),
\eea
one can get the desired result.
The proof is finished.
\end{proof}
\section{Deriving Total Error}\label{sec:deriving}
In this section, we estimate the total errors for the studied algorithms with different choices of parameters.

\subsection{Case 1: $\lambda=0,\eta_1 = \eta(T), \theta=0$}

\begin{proof}[Proof of Theorem \ref{thm:general_fixeta_un}]
By Proposition \ref{pro:TraceSumEst}, we have \eref{traceSumEst}, where $\mcF_{\eta_1,\theta, \lambda,\gamma}$ is given by \eref{mcF}. Plugging with $\lambda=0$, $\theta = 0$, $\eta_1 = \eta(T)$, and using $2^{\gamma -1} \leq 1$ since $\gamma\in [0,1]$, we have that for any $t\in [T],$
\be\label{eq1}
\sum_{k=1}^{t}  \eta_k^2   \mE_{v_k} [\|\Pi_{k+1}^t(L_{K,\lambda}) \phi_{v_k} \|_{\rho}^2] \leq (c_{\gamma} + \kappa^2)2 \eta(T)^{\gamma+1} t^{\gamma} \ln(\mathrm{2}t).
\ee
Taking the maximum over $t \in [T],$  and then multiplying both sides by $\kappa^2,$
\bea
\kappa^2 \max_{t \in [T]} \sum_{k=1}^{t}  \eta_k^2   \mE_{v_k} [\|\Pi_{k+1}^t(L_{K,\lambda}) \phi_{v_k} \|_{\rho}^2] \leq 2\kappa^2 (c_{\gamma} + \kappa^2) \eta(T)^{\gamma+1} T^{\gamma} \ln(\mathrm{2}T).
\eea
Condition \eref{etaT} ensures the right-hand side of the above is less than $1/2.$ This verifies \eref{restriction}. Thus, we can apply Proposition \ref{pro:totalerror_inter} to get \eref{totalerror_interm}.
Note that by Proposition \ref{pro:initialerror}, the initial error can be bounded as
\bea
\|S_1(t)\|_{\rho}^2 \leq R^2 \zeta^{2\zeta} \left(\eta(T)t\right)^{-2\zeta}.
\eea
 Plugging the above inequality and \eref{eq1} into \eref{totalerror_interm}, we derive
 \be\label{fixedStepExplict}
 \mE[\|r_{t+1}\|_2^2] \leq R^2 \zeta^{2\zeta} \left(\eta(T)t\right)^{-2\zeta} + 4(4\|f_{\rho}\|_{\rho}^2 + \mcE(f_{\rho}))\kappa^2  (c_{\gamma} + \kappa^2) \eta(T)^{\gamma+1} t^{\gamma} \ln(\mathrm{2}t),
 \ee
which leads to the desired result. The proof is complete.
\end{proof}

\begin{proof}[Proof of Corollary \ref{cor:eta}]
  We only need to verify \eref{etaT}.
  Since $\kappa^2 \geq 1$ and $\zeta >0,$ we know that ${\zeta \over 4\kappa^2 (c_{\gamma} + \kappa^2)(\zeta+1)} \leq 1,$ and consequently,
  $$\left({\zeta \over 4\kappa^2 (c_{\gamma} + \kappa^2)(\zeta+1)}\right)^{\gamma+1} \leq  {\zeta \over 4\kappa^2 (c_{\gamma} + \kappa^2)(\zeta+1)}.$$
  Therefore,
  \be\label{eq2}
  \eta(T)^{\gamma+1} T^{\gamma} \leq {\zeta \over 4\kappa^2 (c_{\gamma} + \kappa^2)(\zeta+1)} T^{-{(\gamma + 2\zeta)(\gamma+1) \over \gamma + 2\zeta + 1}} T^{\gamma} = {\zeta \over 4\kappa^2 (c_{\gamma} + \kappa^2)(\zeta+1)} T^{-{ 2\zeta \over \gamma + 2\zeta + 1}}.
  \ee
  Rewriting $T^{-{ 2\zeta \over \gamma + 2\zeta + 1}}$ as
  \bea
  2^{2\zeta \over \gamma + 2\zeta + 1} \exp\left\{-{ 2\zeta \over \gamma + 2\zeta + 1} \ln(2T)\right\},
  \eea
  and then applying \eref{exppoly} (from the appendix) with $c= { 2\zeta \over \gamma + 2\zeta + 1} \ln(2T)$, $x=1$ and $\zeta'=1,$ we know that
  \bea
  \eta(T)^{\gamma+1} T^{\gamma} \leq {\zeta \over 4\kappa^2 (c_{\gamma} + \kappa^2)(\zeta+1)}  2^{2\zeta \over \gamma + 2\zeta + 1} {\gamma+ 2\zeta+1 \over 2\zeta e \ln(2T)} \leq {1 \over 4\kappa^2 (c_{\gamma} + \kappa^2) \ln(2T)},
  \eea
  which leads to \eref{etaT}. Thus, following from the proof of Theorem \ref{thm:general_fixeta_un}, we get \eref{fixedStepExplict}. Plugging with
  \eref{etaTS} and \eref{eq2}, we get
  \bea
 \mE[\|r_{T+1}\|_2^2] \leq \left(R^2 ( 4\kappa^2(c_{\gamma}  +\kappa^2) (\zeta+1))^{2\zeta} + (4\|f_{\rho}\|_{\rho}^2 + \mcE(f_{\rho})){\zeta \over \zeta+1}\right) T^{-{2\zeta \over \gamma+ 2\zeta + 1}} \ln(\mathrm{2}T).
 \eea
  The proof is complete.
\end{proof}

\subsection{Case 2: $\lambda=0,\eta_1 = const, \theta>0$}

\begin{proof}[Proof of Theorem \ref{thm:general_theta}]
According to Proposition \ref{pro:TraceSumEst}, we have \eref{traceSumEst}, with $\mcF_{\eta_1,\theta, \lambda,\gamma}$ given by \eref{mcF}. When $\lambda=0$,
  \bea
 \mcF_{\theta,\eta_1,\lambda,\gamma} = \eta_1^{\gamma+1} t^{\gamma -\theta(\gamma + 1)} \ln(2t) \left( t^{(2\theta-1)_+} +1 \right) \leq 2\eta_1^{\gamma+1} t^{\gamma -\theta(\gamma + 1)+ (2\theta-1)_+} \ln(2t).
  \eea
Also, $2^{2\theta + \gamma - 1} \leq 2^{2\theta}$ since $\gamma \leq 1.$ Therefore,
  \bea
\sum_{k=1}^{t}  \eta_k^2   \mE_{v_k} [\|\Pi_{k+1}^t(L_{K,\lambda}) \phi_{v_k} \|_{\rho}^2] \leq (2^{2\theta}c_{\gamma} + \kappa^2) 2\eta_1^{\gamma+1} t^{\gamma -\theta(\gamma + 1) + (2\theta-1)_+}  \ln(2t).
\eea
Taking the maximum over $t \in [T]$ on both sides, multiplying both sides by $\kappa^2$, and recalling that $c_{\theta,\gamma}$ is given by \eref{cthetagamma},
\be\label{eq3}
\kappa^2\max_{t\in [T]}\sum_{k=1}^{t}  \eta_k^2   \mE_{v_k} [\|\Pi_{k+1}^t(L_{K,\lambda}) \phi_{v_k} \|_{\rho}^2] \leq 2\kappa^2(2^{2\theta}c_{\gamma} + \kappa^2) \eta_1^{\gamma+1} c_{\theta,\gamma}.
\ee
Condition \eref{eta1restriction} and $\kappa^2\geq 1$ imply that $\eta_1\leq 1$ and thus $\eta_1^{\gamma+1} \leq \eta_1$ since $\gamma \geq0.$ Therefore, by \eref{eta1restriction},
\bea
2\kappa^2(2^{2\theta}c_{\gamma} + \kappa^2) \eta_1^{\gamma+1} c_{\theta,\gamma} \leq 2\kappa^2(2^{2\theta}c_{\gamma} + \kappa^2) \eta_1 c_{\theta,\gamma} \leq 1/2.
\eea
Thus, \eref{restriction} holds. Now, we can apply Proposition \ref{pro:totalerror_inter} to obtain \eref{totalerror_interm}.
By Proposition \ref{pro:initialerror}, with $\lambda=0$, the initial error can be estimated as
\bea
\|S_1(t)\|_{\rho}^2 \leq (\zeta/\eta_1)^{2\zeta} R^2 t^{2\zeta(\theta-1)}.
\eea
Introducing the above and \eref{eq3} into \eref{totalerror_interm}, we get
\be\label{eq10}
\mE[\|r_{t+1}\|_{\rho}^2]
\leq (\zeta/\eta_1)^{2\zeta} R^2 t^{2\zeta(\theta-1)}  + 4(4\mcE(f_{\rho})
+ \|f_{\rho}\|_{\rho}^2)\kappa^2(2^{2\theta}c_{\gamma} + \kappa^2) \eta_1^{\gamma+1} t^{\gamma -\theta(\gamma + 1) + (2\theta-1)_+}  \ln(2t).
\ee
The proof is complete.
\end{proof}

\begin{proof}[Proof of Corollary \ref{cor:theta}]
	 We first need to prove \eref{eta1restriction}.
 If ${\gamma\over \gamma + 1}<\theta \leq 1/2$ and $\gamma \in[0,1)$, we have $2^{\theta} \leq \sqrt{2}$, and by \eref{exppoly} (from the appendix),
  $$
  (2t)^{\gamma - \theta(\gamma+1) } = \exp\left\{ -(\theta(\gamma+1) -\gamma) \ln (2t) \right\} \leq {1 \over e(\theta(\gamma+1) -\gamma) \ln(2t)},
  $$
  which implies
  $$c_{\theta,\gamma} = \max_{t\in [T]} \left\{ t^{\gamma - \theta(\gamma+1) } \ln (\mathrm{2}t) \right\} \leq {2^{\theta(\gamma+1) - \gamma} \over e(\theta(\gamma+1) -\gamma) } \leq {1 \over \sqrt{2}(\theta(\gamma+1) -\gamma)} .$$
 Therefore, \eref{eta1restriction} holds if
 \be\label{eq8}
 0< \eta_1 \leq { \theta(\gamma+1) -\gamma \over 3\kappa^2(2c_{\gamma} + \kappa^2)}.
 \ee

  When $2\zeta <1-\gamma,$ $\theta= {2\zeta + \gamma \over 2\zeta + \gamma + 1},$ and $\eta_1 = {\zeta \over 3\kappa^2(2c_{\gamma} + \kappa^2)},$ obviously,
 $$
 \theta = 1 - {1 \over 2\zeta + \gamma + 1} < {1 \over 2},
 $$
 and
 $$
 \eta_1 \leq {2\zeta \over 3\kappa^2(2c_{\gamma} + \kappa^2)(2\zeta + \gamma + 1)} = {\theta(\gamma+1) - \gamma \over 3\kappa^2(2c_{\gamma} + \kappa^2)}.
 $$
 This proves \eref{eq8} and consequently, \eref{eta1restriction}.
 Following the proof of Theorem \ref{thm:general_theta}, we thus have \eref{eq10}, which can be relaxed as
 \bea
\mE[\|r_{t+1}\|_{\rho}^2]
\leq \left\{(3\kappa^2(2c_{\gamma} + \kappa^2))^{2\zeta} R^2 + 2\zeta(4\mcE(f_{\rho})
+ \|f_{\rho}\|_{\rho}^2) \ln(2t) \right\} t^{-{2\zeta \over 2\zeta+ \gamma+1}} .
\eea
This leads to the first result of the theorem.

 When $2\zeta \geq 1 - \gamma$, $\theta = {1/2}$ and $\eta_1 = {1-\gamma \over 6\kappa^2 (2c_{\gamma} +\kappa^2)}.$ Condition \eref{eq8} is satisfied trivially.
  Thus, following the proof of Theorem \ref{thm:general_theta}, we have  \eref{eq10} and consequently,
  \bea
\mE[\|r_{t+1}\|_{\rho}^2]
\leq (6\kappa^2 (2c_{\gamma} +\kappa^2)\zeta/(1-\gamma))^{2\zeta} R^2 t^{-\zeta}  + (4\mcE(f_{\rho})
+ \|f_{\rho}\|_{\rho}^2) t^{\gamma - 1 \over 2}  \ln(2t),
\eea
which leads to the second result of the theorem by noting that
$t^{-\zeta} \leq  t^{\gamma - 1 \over 2}.$
The proof is complete.
\end{proof}

\subsection{Case 3: $\lambda>0,\eta_1 = const, \theta>0$}

\begin{proof}[Proof of Theorem \ref{thm:general_lambda}]
Following the proof of Theorem \ref{thm:general_theta}, we know that the condition \eref{restriction} from Proposition \ref{pro:totalerror_inter} is satisfied, and thus it holds that \eref{totalerror_interm}. Introducing \eref{initialerror} and \eref{traceSumEst} into \eref{totalerror_interm}, with $t=T,$ $\lambda = T^{\theta-1 +\epsilon},$ $\epsilon \in(0,1-\theta]$ and $\zeta \in(0,1],$ by a direct calculation, we get
\bea
\mE[\|r_{T+1}\|_{\rho}^2] \leq R^2\left( 1 + (\zeta/\eta_1)^{\zeta} \right)^2 T^{2 \zeta (\theta-1 +\epsilon)} + 2\kappa^2 (2^{2\theta}c_{\gamma} + \kappa^2) (4\|f_{\rho}\|_{\rho}^2 + \mcE(f_{\rho})) \mcF_{\eta_1,\theta, \lambda,\gamma}(T).
\eea
Recalling that $\mcF_{\eta_1,\theta, \lambda,\gamma}(t)$ is given by \eref{mcF}, by $\lambda = T^{\theta-1 +\epsilon}$ and \eref{exppoly} (from the appendix),
\bea
\exp\{-\eta_1\lambda T^{1-\theta} \} = \exp\{-\eta_1 T^{\epsilon} \} \leq \left( {(2\theta-1)_+ /\epsilon \over e\eta_1 T^{\epsilon}} \right)^{(2\theta-1)_+ /\epsilon} \leq
\left( {\epsilon e\eta_1 } \right)^{-(2\theta-1)_+ /\epsilon} T^{-(2\theta-1)_+} .
\eea
Thus,
\bea
\mcF_{\eta_1,\theta, \lambda,\gamma}(T) \leq \eta_1^{\gamma+1} T^{\gamma - \theta(\gamma+1)} \ln(2T) \left( \left( {\epsilon e\eta_1 } \right)^{-(2\theta-1)_+ /\epsilon} + 1\right).
\eea
It thus follows from the above analysis that
\begin{flalign*}
	\begin{split}
		&\mE[\|r_{T+1}\|_{\rho}^2] \leq R^2\left( 1 + (\zeta/\eta_1)^{\zeta} \right)^2 T^{2 \zeta (\theta-1 +\epsilon)}\\  &\quad \quad\quad+ 2\kappa^2(4\|f_{\rho}\|^2 + \mcE(f_{\rho}))(2^{2\theta}c_{\gamma} + \kappa^2) \eta_1^{\gamma+1} \left( \left( {\epsilon e\eta_1 } \right)^{-(2\theta-1)_+ /\epsilon} + 1\right) T^{\gamma - \theta(\gamma+1)} \ln(2T).
	\end{split}
\end{flalign*}
The proof is complete.
\end{proof}

\begin{proof}[Proof of Corollary \ref{cor:lambda}]
  We use Theorem \ref{thm:general_lambda}, with $\theta = {2\zeta + \gamma \over 2\zeta + \gamma + 1}$ and $\epsilon$ replaced by ${\epsilon \over 2\zeta},$ to prove the result. Obviously, we only need to prove the condition \eref{eta1restriction} is true. By a similar argument as that for \eref{eq8}, we know that \eref{eta1restriction} is true if
  \bea
  0< \eta_1 \leq {\theta(\gamma+1) - \gamma \over 3\kappa^2(2^{2\theta}c_{\gamma} +\kappa^2)}.
  \eea
  This can be verified by noting that
  $\theta(\gamma+1) - \gamma = {2\zeta \over 2\zeta + \gamma + 1} \geq {\zeta \over \zeta + 1}$ and $2^{2\theta}\leq 2^{3/2} \leq 3.$
  The proof is complete.
\end{proof}

\section*{Acknowledgement}
This material is based upon work supported by the Center for Brains, Minds and Machines (CBMM), funded by NSF STC award CCF-1231216. L. R. acknowledges the financial support of the Italian Ministry of Education, University and Research FIRB project RBFR12M3AC.  

\section*{Appendix}
	\begin{proof}[Proof of Lemma \ref{lemma:initialerror1}]
		Similar to the proof for Lemma \ref{lemma:initialerror2}, we have
		\bea
		\| \Pi_{k+1}^t(L_{K,\lambda}) L_K^{\zeta}\| = \sup_{i} \prod_{j=k+1}^t (1 - \eta_j(\lambda + \sigma_i))\sigma_i^{\zeta}.
		\eea
		Using the basic inequality
		\be\label{expx}
		1 + x \leq e^{x} \qquad \mbox{for all } x \geq -1, 
		\ee
		with $\eta_j(\lambda+ \sigma_i) \leq 1$, we get
		\begin{align*}
		\| \Pi_{k+1}^t(L_{K,\lambda}) L_K^{\zeta}\| \leq& \sup_i \exp\left\{ - (\lambda + \sigma_i) \sum_{j=k+1}^t \eta_j\right\} \sigma_i^{\zeta} \\
		\leq & \exp\left\{ - \lambda  \sum_{j=k+1}^t \eta_j\right\} \sup_{x \geq 0} \exp\left\{ - x  \sum_{j=k+1}^t \eta_j\right\} x^{\zeta}.
		\end{align*}
		The maximum of the function $g(x) = e^{-cx}x^{\zeta}$( with $c>0$) over $ \mR_+ $ is achieved at $x_{\max}= \zeta/c,$ and thus
		\be\label{exppoly}
		\sup_{x \geq 0} e^{-cx} x^{\zeta} =  \left({\zeta \over ec} \right)^{\zeta}.
		\ee
		Using this inequality, one can get the desired result \eref{initialerror1_interm}.
	\end{proof}

\bibliography{doubly_lin_rosasco}
\bibliographystyle{plain}

\end{document}